\documentclass{article}
\usepackage{iclr2027_conference,times}

\usepackage{amsmath,amssymb,mathtools,amsthm,mathrsfs}
\usepackage{thmtools}
\usepackage{thm-restate}
\usepackage{algorithm}
\usepackage[noend]{algpseudocode}
\usepackage{xcolor}
\algrenewcommand{\algorithmiccomment}[1]{%
  \hfill\textcolor{black!55}{$\triangleright$~#1}%
}
\usepackage{booktabs}
\usepackage{graphicx}
\usepackage{placeins}
\usepackage{microtype}
\DeclareMicrotypeAlias{Noto Color Emoji}{TU-empty}
\usepackage{hyperref}
\usepackage{url}
\hypersetup{colorlinks=true,allcolors=blue}
\usepackage[nameinlink,noabbrev]{cleveref}
\usepackage{verbatim}
\creflabelformat{equation}{#2{}#1{}#3}
\crefname{equation}{eq.}{eqs.}
\Crefname{equation}{Eq.}{Eqs.}

\usepackage{etoc}

\usepackage{amsmath,amsfonts,bm}

\def\mH{{\bm{H}}}

\def\mX{{\bm{X}}}

\renewcommand{\epsilon}{\varepsilon}

\newcommand{\D}{\mathscr{D}}

\renewcommand{\O}{\mathcal{O}}

\newcommand{\reals}{\mathbb{R}}

\usepackage{amsmath}
\usepackage{xcolor}
\usepackage{tcolorbox}
\tcbuselibrary{theorems,skins}

\definecolor{equationblue}{HTML}{365F82}

\tcbset{
  equationbox/.style={
    enhanced,
    colback=equationblue!5!white,
    colframe=equationblue!45!white,
    boxrule=0pt,
    arc=2mm,
    boxsep=0pt,
    left=10pt,
    right=10pt,
    top=7pt,
    bottom=7pt,
    colbacktitle=equationblue!5!white,
    coltitle=equationblue,
    fonttitle=\small\sffamily\bfseries,
    titlerule=0pt,
    toptitle=6pt,
    bottomtitle=2pt,
  }
}

\renewcommand{\boxed}[2][]{%
  \tcboxmath[equationbox,title={#1}]{#2}%
}

\usepackage{thmtools}

\newcommand{\red}[1]{\textcolor[rgb]{0.635,0.0780,0.1840}{#1}}

\definecolor{shade}{rgb}{0.9,0.9,0.9}

\DeclareMathOperator*{\argmin}{arg\,min}

\newcommand{\emoji}[1]{%
  \raisebox{-0.1em}{%
    \includegraphics[height=1em]{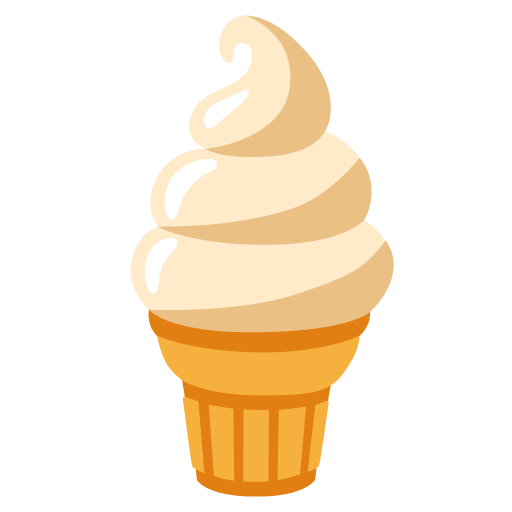}%
  }%
}

\newcommand{\mat}[1]{\bm{#1}}

\newcommand{\R}{\mathbb{R}}
\newcommand{\tr}{\operatorname{tr}}
\newcommand{\diag}{\operatorname{diag}}
\newcommand{\vecop}{\operatorname{vec}}

\newcommand{\SoftServe}{\textnormal{\textsc{SoftServe}}}
\newcommand{\QME}{\operatorname{QME}}
\newcommand{\norm}[2][]{\left\lVert #2\right\rVert_{#1}}

\newcommand{\Splusplus}{\mathbb{S}_{\scalebox{0.7}{$\scriptscriptstyle ++$}}}

\usepackage[colorinlistoftodos]{todonotes}

\newcommand{\rob}[1]{\todo[inline,bordercolor=blue,backgroundcolor=blue!20,linecolor=blue]{\textbf{Robert: }#1}}

\usepackage[framemethod=tikz]{mdframed}

\definecolor{shadecolor}{RGB}{244,248,253}
\definecolor{shadeborder}{RGB}{190,207,225}

\declaretheoremstyle[
  headfont=\normalfont\bfseries,
  notefont=\mdseries,
  notebraces={(}{)},
  bodyfont=\normalfont,
  postheadspace=0.5em,
  spaceabove=6pt,
  mdframed={
    skipabove=8pt,
    skipbelow=8pt,
    hidealllines=false,
    backgroundcolor=shadecolor,
    linecolor=shadeborder,
    linewidth=0.5pt,
    roundcorner=4pt,
    innerleftmargin=10pt,
    innerrightmargin=10pt,
    innertopmargin=7pt,
    innerbottommargin=7pt
  }
]{shaded}

\declaretheorem[style=shaded,numberwithin=section]{definition}

\declaretheorem[style=shaded,sibling=definition]{lemma}

\title{%
  \SoftServe{}{\normalfont\emoji{soft-ice-cream}}:
  A Scalable \\
  Quasi-Newton Method for Deep Learning%
}
\author{
Joohwan Ko\textsuperscript{1,$\dagger$}
\quad
Tetiana Parshakova\textsuperscript{2}
\quad
Diana Cai\textsuperscript{3,$\ddagger$}
\quad
Robert M.~Gower\textsuperscript{2,$\ddagger$}
\\[0.5em]
{\normalfont
\textsuperscript{1}University of Massachusetts Amherst
\quad
\textsuperscript{2}CCM, Flatiron Institute
\quad
\textsuperscript{3}Cornell University
}
}

\iclrfinalcopy
\begin{document}
\etocdepthtag.toc{main}
\maketitle

{\renewcommand{\thefootnote}{\fnsymbol{footnote}}
\footnotetext[2]{Work done while at the Flatiron Institute. Email: \texttt{joohwanko@cs.umass.edu}.}
\footnotetext[3]{Equal senior authors.}
}
\begin{abstract}

Quasi-Newton (QN) methods have long been among the most effective methods for large-scale unconstrained convex optimization. Two obstacles have limited their use in deep learning: non-convexity and  enormous parameter sizes. 
We introduce \SoftServe{}, a family of QN methods designed to overcome these obstacles without line searches or ad hoc curvature corrections.
\SoftServe{} derives positive-definite curvature estimates from the variational objective of~\citet{berglund2024}, even in the presence of negative curvature.
We develop diagonal and Kronecker-factored variants that preserve positive definiteness by construction and scale to 
massive neural networks.
Finally, \SoftServe{} relies on the stable coupled Newton--Schulz iteration for the required matrix operations, replacing costly matrix decompositions with GPU-friendly matrix multiplications.
\SoftServe{} excels on problems that are severely ill-conditioned, including tasks such as recurrent networks, deep autoencoders, physics-informed
neural networks, and a 136M-parameter physics-informed diffusion model, often achieving lower losses than established baselines including Adam, Muon, and SOAP.
\end{abstract}

\section{Introduction}
\label{sec:introduction}

Quasi-Newton methods, such as the celebrated BFGS method \citep{broyden1965convergence,fletcher1970new,goldfarb1970family,shanno1971conditioning}, are among the most effective methods for 
large-scale, 
convex deterministic optimization.
Deep learning  has led to renewed interest
in adapting these methods to train large networks \citep{goldfarb2020practical,ren2021kronecker}.
However, applying these methods to deep learning
requires  curvature estimates
that can accommodate non-convex objectives and gradient noise while remaining computationally efficient.

To see how curvature enters the optimization update,
consider a generic second-order method.
For parameters  \(x \in \R^D\),   second-order methods typically take steps of the form
\begin{equation*}
 x^+ = x -\gamma \mat{H}g.
\end{equation*}
Here \(x^+\) is the updated parameter vector, and \(\gamma>0\) is the step size.
The vector \(g \in \R^D\) is the (stochastic) gradient at \(x\), and the matrix
\(\mat{H} \in \R^{D\times D}\) is an estimate of the inverse Hessian.

Non-convexity of the objective 
poses a fundamental challenge for second-order optimization methods because the true Hessian could have negative eigenvalues. 
If the estimate~$\mat{H}$ has a negative eigenvalue, then the update direction~$-\mat{H}g$ could be an ascent direction, which would increase the loss function. 
This is one of the reasons why first-order methods are often preferred in deep learning: one can always guarantee descent (in expectation). 
For a second-order method to be applicable in deep learning, the estimate $\mat{H}$ must always be positive definite, even in the presence of directions of negative curvature.

For quasi-Newton methods, such as BFGS, maintaining positive definiteness is at odds with the main tool used to measure curvature, namely the \emph{secant equation}. 
In the presence of negative curvature, enforcing the secant equation can inadvertently force the Hessian estimate to become indefinite, as we will show in~\Cref{sec:preliminaries}.
The standard remedy 
is to use sophisticated line search procedures to only move in directions of positive curvature \citep{nocedal2006numerical,Wolfe1969}.
But these line searches require access to the full batch loss, which is too costly and not feasible in the stochastic setting of deep learning.

Beyond the challenges arising from non-convexity
and stochastic objectives,
the sheer scale of modern deep learning models makes storing and updating dense curvature estimates prohibitively expensive.
A standard approach for reducing this cost is the limited memory approach of L-BFGS \citep{liu1989limited},  
which typically uses only 10 pairs of parameter and gradient differences to implicitly build the estimate~$\mat{H}$. 
But storing 10 pairs of parameters and gradients would require storing 10 copies of our model and 10 copies of the gradient, increasing the memory storage by 20-fold. This is infeasible for all but small models.

To scale to modern deep learning, 
it is common to approximate the matrix~$\mat{H}$ by a diagonal matrix~\citep{duchi_adaptive_2011,tieleman2012rmsprop,kingma_adam_2015} or a Kronecker-structured matrix~\citep{kfac,gupta2018shampoo}.
For quasi-Newton optimization,
while
Kronecker-structured updates have been proposed for applying BFGS to deep learning \citep{goldfarb2020practical,ren2021kronecker},
this approach still needs to be 
applied with mechanisms for handling non-convexity
and stochasticity, limiting its applicability.

A promising new direction 
is to reconsider how curvature enters the quasi-Newton update.
Recently, \citet{berglund2024} show that by replacing the
secant equation with a soft secant penalty we can ensure that our Hessian estimate is positive definite, even in the presence of negative curvature.
But \citet{berglund2024} only consider dense estimates,
making their approach infeasible for deep learning.

We introduce \SoftServe{} (Soft \textbf{S}ecant \textbf{E}quation \textbf{R}egularized with \textbf{V}ariational \textbf{E}ntropy), a quasi-Newton framework that combines the flexibility of a soft secant constraint~\citep{PSGD-Lie,berglund2024} with the computational advantages of a structured matrix approximation of the inverse Hessian.
Optimizing within diagonal and Kronecker-structured families yields novel 
updates that 
preserve positive definiteness, without full batch evaluations or line searches.
We develop efficient algorithms for computing these updates using modern GPU-friendly numerical linear algebra methods, such as variants of Newton-Schulz~\citep{polarexpress,higham_functions_2008}, allowing \SoftServe{} to scale to large neural networks.  

We then demonstrate the utility of \SoftServe{} on several deep learning tasks where curvature information is fundamental---such as severely ill-conditioned tasks---including recurrent networks, autoencoders, and physics-informed models (such as a 136M-parameter diffusion model).
On these tasks we find that \SoftServe{} often outperforms modern methods such as Adam, Muon, and SOAP. 
A Python implementation of 
\SoftServe{} is available at \url{https://github.com/joohwanko/SoftServe}.

\section{Quasi-Newton Preliminaries}
\label{sec:preliminaries}

Here we introduce our notation and review important quasi-Newton~(QN)
ideas that are relevant to \SoftServe{}.
For simplicity, in this section we represent all parameters as a flattened vector
 \(x\in\R^D\). We introduce the setting of deep learning in subsequent sections, where parameters can be matrices. 

\vspace{-10pt} \paragraph{Notation.}
We write \(\mathbb S_{++}^d\) and \(\mathbb S_+^d\) for the symmetric
positive definite and positive semidefinite \(d\times d\) matrices, \(\mat{I}_d\) for
the identity, and we define \(\norm[\mat{M}]{v}^2 :=v^\top \mat{M}v\) for \(\mat{M}\in\mathbb S_{++}^d\).

\subsection{The BFGS algorithm}
\label{sec:bfgs}

Let $f$ be a continuous,
twice differentiable function on $\mathbb{R}^D$.
Given  iterates~$x_t$ and gradient evaluations~$\nabla f(x_t)$,
quasi-Newton methods use the following iterate and gradient differences
\begin{align*}
 s_t &:= x_{t+1}-x_t,
 \\
 y_t &:= \nabla f(x_{t+1})-\nabla f(x_t),
 \label{eq:generic-secant}
\end{align*}
to estimate curvature. Indeed, using Taylor's theorem we have that
\begin{equation}
 y_t  = \mat{B}_t s_t,
\label{eq:taylor}
\end{equation}
where $\mat{B}_t:=\int_0^1 \nabla^2 f(x_t+\tau s_t)\,d\tau$ is the averaged Hessian.
Quasi-Newton methods use~\Cref{eq:taylor} to find an estimate of the inverse Hessian
$\mat{H}$ by enforcing the \emph{secant equation} 
\begin{align} \label{eq:secant}
   \boxed{ \mat{H} y_t = s_t.}
\end{align}
However, the secant equation is underdetermined:\ 
$\mat{H}$ has $D^2$ unknown entries, while~\Cref{eq:secant} imposes only $D$ constraints. 
Thus, to obtain a unique update, one must impose an additional regularization criterion.
Quasi-Newton methods resolve this by choosing $\mH$ that both satisfies the secant equation and is close to the previous estimate $\mH_t$ in some sense.

For instance, the BFGS method chooses the matrix~$\mat{H}$
that satisfies the secant equation and is closest to~$\mat{H}_t$
in a weighted Frobenius norm \citep{nocedal2006numerical}; see
Appendix~\ref{app:bfgs}.
We adopt
an equivalent characterization of the BFGS update due to \citet{FletcherKL},
where closeness is measured by the
\emph{log-determinant divergence}.
This formulation will be the starting point for
\SoftServe{}.
For \(\mat{P},\mat{Q}\in\mathbb S_{++}^d\) 
the log-determinant divergence is defined as
\begin{equation}
 \D(\mat{P}; \mat{Q}) :=
 \tr(\mat{P}\mat{Q}^{-1})-\log\det(\mat{P}\mat{Q}^{-1})-d.
 \label{eq:ld-divergence}
\end{equation}
This divergence is strictly convex in $\mat{P}$ (given fixed $\mat{Q}$)
and also is equivalent (up to a constant) to the \emph{relative entropy}, or
Kullback-Leibler divergence between two Gaussians with the same mean and covariances~$\mat{P}$ and $\mat{Q}$.
\citet{FletcherKL} showed that, when \(s_t^\top y_t>0\),
the BFGS update is given by the solution of the following optimization problem:
\begin{align}
 \label{eq:bfgs-inverse-variational}
\boxed{
 \mat{H}_{t+1}^{\rm BFGS}=\underset{\mat{H}\in \mathbb S_{\scalebox{0.7}{$\scriptscriptstyle ++$}}^D}{\arg\min}~\D(\mat{H}_t; \mat{H})
 \quad \text{subject to}\quad \mat{H}y_t=s_t.
}
\end{align}

\paragraph{What happens in the non-convex and stochastic setting?}
The requirement  \(s_t^\top y_t>0\) is called the \emph{curvature condition} and it
always holds for convex objectives, but can fail for nonconvex objectives. 
Indeed, if $s_t^\top y_t <0$,
then any matrix~$\mat{H}$ satisfying the secant equation must also satisfy $s_t^\top y_t = y_t^\top \mat{H} y_t <0,$
and thus cannot be positive definite.

In stochastic settings, the construction of the gradient difference $y_t$ requires additional care.
Let $\nabla f_\xi(x)$ denote a stochastic gradient evaluated
with random draw $\xi$, which specifies the minibatch and other sources of randomness such as data augmentation or dropout masks.
In this case, we compute
$y_t = \nabla f_\xi(x_{t+1})-\nabla f_\xi(x_t)$
using the same draw at both endpoints,
in order to ensure that the gradient difference
reflects the curvature for a single sampled loss. Enforcing the secant equation~\Cref{eq:secant} in the stochastic setting is also problematic, since~\Cref{eq:secant} would force $\mH$ to exactly fit the curvature of just one stochastic loss.

\subsection{Soft quasi-Newton}
\label{sec:soft-qn}


Soft QN (quasi-Newton)
builds on the alternative variational 
formulation of BFGS given in
\Cref{eq:bfgs-inverse-variational},
relaxing the secant constraint by transforming it into a least-squares penalty in the objective function~\citep{berglund2024}. Given $\lambda > 0$ and the current estimate $\mat{H}_t$,
the Soft QN algorithm  approximates the inverse Hessian by solving
\begin{align}
\boxed{
 \mat{H}_{t+1}=\underset{\mat{H}\in\Splusplus^D}{\arg\min}~
       \norm[\mat{H}^{-1}]{s_t-\mat{H}y_t}^2
 + \tfrac{1}{\lambda}\D(\mat{H}_t; \mat{H}).
 }
\label{eq:soft-qn-objective}
\end{align}
The first term in \Cref{eq:soft-qn-objective} penalizes the secant residual; the second term measures how far $\mat{H}_t$ is from $\mat{H}$ in log-determinant divergence~\Cref{eq:ld-divergence}. The parameter \(\lambda > 0\) controls the tradeoff between the two terms.

\Cref{eq:soft-qn-objective} has several nice properties.
First, it is scale invariant
and strictly convex in $\mat{H}^{-1}$.
Second,
there is a closed-form solution 
for $\mat{H}_{t+1}$, given in \Cref{eq:dense-soft-qn-closed-form}.
Third, the BFGS update
is also a solution to~\Cref{eq:soft-qn-objective} in the limit as \(\lambda\to\infty\) and \(s_t^\top y_t>0\) (see~\citet{berglund2024} and Appendix~\ref{app:soft-quasi-newton}).
Finally 
and crucially,
Soft QN 
remains well-defined and positive definite for every finite pair, including those with negative curvature \(s_t^\top y_t\leq0\).  

However, 
computing the solution of \Cref{eq:soft-qn-objective} costs \(\O(D^2)\)  operations, which is computationally expensive, and the memory cost of storing the $D^2$ matrix $\mat{H}$ is prohibitive in deep learning.
We circumvent both the operational and memory costs by restricting  to structured matrices.

\section{\SoftServe{}:\ Quasi-Newton Updates over Structured Families}
\label{sec:method}

To scale to deep neural networks, we consider two choices 
for structured families \(\mathcal H\subset\mathbb S_{++}^D \):
 a diagonal matrix family
and Kronecker-structured matrix family.
\SoftServe{} then solves \Cref{eq:soft-qn-objective} for an inverse Hessian approximation $\mat{H}$ over the structured family $\mathcal{H}$,
instead of over dense matrices in $\mathbb S_{++}^D$.
This optimization 
yields novel and efficiently-computable
updates, which we call \SoftServe{}-Diag
and \SoftServe{}-Kron. 

Instead of the $\O(D^2)$ cost of Soft QN,
\SoftServe{}-Diag 
reduces the memory and computational cost to \(\O(D)\), which is comparable to that of Adam~\citep{kingma_adam_2015}. 
\SoftServe{}-Kron
reduces the costs to \(\O(n^2+m^2)\) for each \(D=n\times m\) parameter block, which is the cost of Shampoo~\citep{gupta2018shampoo} and KFAC~\citep{kfac}.

\subsection{Diagonal approximation}
\label{sec:diag}
\SoftServe{}-Diag restricts the inverse Hessian estimate~$\mat{H}$ to be diagonal. 
Thus we need only store
one positive scale per coordinate, \(\mat{H}=\diag(h)\) with \(h\in\R_{++}^D\). Both terms of the objective in~\Cref{eq:soft-qn-objective} then split into sums over coordinates $h_{t+1,i}$.
Each coordinate is a scalar problem with a unique, closed-form solution $h_{t+1,i} > 0$ (\Cref{lem:diag-update}), given by
\begin{align} 
\label{eq:diag-update1}
\boxed{
 h_{t+1,i}=
 \frac{2\left(h_{t,i}+\lambda s_{t,i}^2\right)}
 {1+\sqrt{1+4\lambda y_{t,i}^2
                  \left(h_{t,i}+\lambda s_{t,i}^2\right)}},
 \qquad i=1,\ldots,D.
 }
\end{align}

Thus, the matrix estimate is $\mat{H}_{t+1}=\diag(h_{t+1})$.
Though \SoftServe{}-Diag is inexpensive, and 
can perform reasonably well, 
it ignores all interactions between coordinates.

\subsection{Kronecker-structured approximation}
\label{sec:kron}

Neural network parameters are often naturally represented
as matrices.
Let $\mat{X} \in \R^{n\times m}$ denote a layer's weight matrix and $\mat{G}:=\nabla_{\mat{X}}f \in \R^{n\times m}$
denote its gradient.
Thus the Hessian matrix associated with $\mat{X}$ 
has $n^2m^2$ entries. 
We can reduce this storage cost by imposing a Kronecker structure: 
\begin{equation}
 \boxed{\mat{H}=\mat{L}\otimes \mat{R},\quad \mbox{where} \quad \mat{L}\in\mathbb S_{++}^m,\quad \mat{R}\in\mathbb S_{++}^n.}
 \label{eq:kron-metric}
\end{equation}
Here $\otimes$ denotes the \emph{Kronecker product}:
if $\mat{L} = [a_{ij}]$, then $\mat{L} \otimes \mat{R}$ is the
$nm\times nm$ block matrix whose $(i,j)^{\text{th}}$ 
block is $a_{ij} \mat{R}$.
Storing the two factors $\mat{L}$ and $\mat{R}$ requires only
$m^2 + n^2$ entries.
By \Cref{eq:kron-vec}, applying \(\mat{H}\) to the gradient also requires only two 
matrix products:
$\mat{H}g=\vecop(\mat{R}\mat{G}\mat{L}),$
where \(g=\vecop(\mat{G})\) and $\vecop(\cdot)$ stacks columns.

Unfortunately  there is no closed-form solution when minimizing the soft quasi-Newton objective~\Cref{eq:soft-qn-objective} over both $\mat{L}$ and $\mat{R}$.  But there is a unique solution if we solve for only one factor at a time.
We therefore update them alternately,
holding one fixed while optimizing the other.
As we now show, each conditional minimization reduces to solving a quadratic matrix equation with a unique positive-definite solution (see \Cref{lem:kron-sweep}).

To see this, we first reshape the secant vectors
into matrices $\mat{S}_t, \mat{Y}_t \in \reals^{n \times m}$,
with
    $\vecop(\mat{S}_t) = s_t$ and
    $\vecop(\mat{Y}_t) =  y_t.$
Holding $\mat{R}=\mat{R}_t$ fixed and 
minimizing the  terms in~\Cref{eq:soft-qn-objective} involving $\mat{L}$,
we obtain the conditional update
\begin{align}
\boxed{
\mat{L}_{t+1} =
    \argmin_{\mat{L} \in \Splusplus^m} \quad\tr(\mat{U}_L \mat{L}) + \tr(\mat{V}_L \mat{L}^{-1}) + \log\det \mat{L},
    }
\label{eq:factorLupdate-main}
\end{align}
where we define the
auxiliary matrices:
\begin{equation} \label{eq:ULVL}
    \mat{U}_L := \tfrac{\lambda}{n} \mat{Y}_t^\top \mat{R}_t \mat{Y}_t \quad \mbox{and} \quad \mat{V}_L := \mat{L}_t + \tfrac{\lambda}{n} \mat{S}_t^\top \mat{R}_t^{-1} \mat{S}_t.
\end{equation}
Differentiating~\Cref{eq:factorLupdate-main} with respect to $\mat{L}$ and setting the derivative to zero, yields the  quadratic matrix equation
\begin{align}
    \boxed{
    \mat{L} \, \mat{U}_L \, \mat{L} + \mat{L} = \mat{V}_L.
    }
        \label{eq:factorA-main}
\end{align}
Here, because $\mat{U}_L \succeq 0$ and $\mat{V}_L \succ 0$, this equation has a unique, positive-definite solution~$\mat{L}_{t+1}$, which we provide in \Cref{eq:qme-symmetric-main} and discuss  further in \Cref{sec:gram}.

We then update the factor $\mat{R}$ by minimizing~\Cref{eq:soft-qn-objective} in $\mat{R}$ and holding $\mat{L}_{t+1}$ fixed:
\begin{align}
\boxed{
\mat{R}_{t+1} =
    \argmin_{\mat{R} \in \Splusplus^n} \quad\tr(\mat{U}_R \mat{R}) + \tr(\mat{V}_R \mat{R}^{-1}) + \log\det\mat{R},
    }
    \label{eq:factorRupdate-main}
\end{align}
where the auxiliary matrices
are now defined as
\begin{equation} \label{eq:URVR}
\mat{U}_R := \tfrac{\lambda}{m} \mat{Y}_t \mat{L}_{t+1} \mat{Y}_t^\top \quad \mbox{and} 
\quad \mat{V}_R := m^{-1} [\tr(\mat{L}_t \mat{L}_{t+1}^{-1}) \mat{R}_t + \lambda \mat{S}_t \mat{L}_{t+1}^{-1} \mat{S}_t^\top].\end{equation}

Solving~\Cref{eq:factorRupdate-main}
    for $\mat{R}_{t+1}$
    reduces to solving an analogous quadratic matrix equation to~\Cref{eq:factorA-main}, $\mat{R} \, \mat{U}_R \, \mat{R} + \mat{R} = \mat{V}_R$, which again has a unique, positive-definite solution (\Cref{lem:kron-sweep}).

The alternating updates replace computations on the full inverse-Hessian estimate $\mat{H}$ with operations on the two smaller factors $\mat{L}$ and $\mat{R}$. 
Nevertheless, directly solving the quadratic matrix equations can be
expensive for wide layers, with  cost
$\O(n^3 + m^3 + n^2m + nm^2)$. 
Instead of using a direct solver, we  will use GPU-friendly Newton-Schulz approximations, as discussed in the next section. 
Using Newton-Schulz approximations was also central to Muon's effectiveness at scale~\citep{polarexpress,liu2025muon}.

\section{Practical implementation of \SoftServe{} for deep learning}

Here we discuss strategies
to make \SoftServe{}
practical for training deep neural networks.

\subsection{Efficient Kronecker updates}
\label{sec:gram}

\SoftServe{}-Kron
requires solving quadratic matrix equations \citep{cai2024},
which are particular instances of the
algebraic Riccati equation~\citep{lancaster1995algebraic}.
Here we discuss how to approximately solve this matrix equation using a GPU friendly approach. These approximations are key for scaling  \SoftServe{} up to large neural networks. 

Both Kronecker factor updates in \Cref{eq:factorLupdate-main} and \Cref{eq:factorRupdate-main}
reduce to solving a quadratic matrix equation of the form $\mat{Q}\mat{U}\mat{Q}+\mat{Q}=\mat{V}$
with $\mat{U} \succeq 0$ and $\mat{V} \succ 0$.
Its unique positive-definite solution can be expressed using principal matrix square roots~$\mat{V}^{1/2}$ (see Appendix~\ref{app:qme}):
\begin{equation}
\boxed{
\mat{Q}
 =2\mat{V}^{1/2}
\left[\mat{I}_d+\left(\mat{I}_d+4\mat{V}^{1/2}\mat{U}\mat{V}^{1/2}\right)^{1/2}\right]^{-1}
 \mat{V}^{1/2}.
 }
 \label{eq:qme-symmetric-main}
\end{equation}

Direct evaluation of  \Cref{eq:qme-symmetric-main}
requires two matrix square roots and a matrix inverse.
For Kronecker updates,
the coefficients admit a natural Gram matrix factorization
$\mat{U} = \mat{F}\mat{F}^\top$,
given in \Cref{app:gram:details}.
Our implementation exploits this structure through
an equivalent 
Gram formulation of \Cref{eq:qme-symmetric-main} given in \Cref{eq:gram:app},
which evaluates a square root of $\mat{I} + 4\mat{F}^\top \mat{V} \mat{F}$ and an inverse,
which avoids the separate square roots of $\mat{V}$.
See Appendix~\ref{app:qme} and \ref{app:gram:details} for details.

While the matrix functions in~\Cref{eq:qme-symmetric-main} can be computed
using, for instance singular value decomposition (SVD), 
most methods for computing the SVD are not well suited for GPUs~\citep{svdnoGPU}. 
To compute the matrix square root efficiently on 
GPUs, we use the coupled Newton-Schulz iteration 
\citep{higham_stable_1997,higham_functions_2008},
which is based only on
matrix multiplications and additions.
For a positive definite matrix $\mat{M}$, 
coupled Newton-Schulz jointly approximates its square root~$\mat{M}^{\frac{1}{2}}$
and its inverse square root~$\mat{M}^{-\frac{1}{2}}$.
Matrix inverses are likewise approximated 
using Newton-Schulz iterations. 
See Appendix~\ref{asec:newton-schulz-couple} for details.

\subsection{Training with \SoftServe{}}

\label{sec:stochastic}

To make \SoftServe{} practical, we  make use of several now-standard practices in deep learning.
The full procedure includes momentum updates,
Nesterov extrapolation or bias correction \citep{kingma_adam_2015},  details of which are in Appendix~\ref{app:training:details}.

\begin{algorithm}[t]
\small
\caption{\SoftServe{}-Kron:\ high-level training iteration (details in \Cref{alg:softserve-kron})}
\label{alg:softserve1}
\begin{algorithmic}[1]
\Require Parameters $\mat{X}_t$; factors $\mat{L}_t,\mat{R}_t$;
momentum estimate $\hat{\mat{M}}_{t}$; step size $\gamma_t$;
penalty $\lambda$.

\State Take the normalized preconditioned step in
\Cref{eq:normalization}, using
\[\mat{X}_{t+1} = \mat{X}_t -\gamma_t\frac{\mat{R}_t\hat{\mat{M}}_t\mat{L}_t}{\sqrt{\tr\left( \hat{\mat{M}}_t^\top\mat{R}_t\hat{\mat{M}}_t\mat{L}_t\right) }} \] \vspace{-0.3cm}
\label{ln:normalized-kron-update}
 
\State Evaluate stochastic gradient, difference of gradients $\mat{Y}_t$ and difference of iterates $\mat{S}_t$
\State Update Kronecker factors by solving quadratic matrix equations (via \Cref{eq:qme-symmetric-main} or \Cref{eq:gram:app}):
\begin{align*}
    &\text{Update~$\mat{L}_{t+1}$~by~solving~}
    \mat{L} \, \mat{U}_L \, \mat{L} + \mat{L} = \mat{V}_L
    \\
    &\text{Update~$\mat{R}_{t+1}$~by~solving~}
    \mat{R} \, \mat{U}_R \, \mat{R} + \mat{R} = \mat{V}_R,
\end{align*}
where the matrices $\mat{U}_L, \mat{V}_L$ are given in~\Cref{eq:ULVL}, and  $\mat{U}_R, \mat{V}_R$ are given in~\Cref{eq:URVR}.
\State \Return $\mat{X}_{t+1}$
\end{algorithmic}
\end{algorithm}

\Cref{alg:softserve1} summarizes one training iteration of \SoftServe{}-Kron applied to one matrix parameter $\mX \in \R^{n \times m}.$
\SoftServe{}-Diag follows the same procedure 
replacing the Kronecker updates with the diagonal update in \Cref{eq:diag-update1}, and detailed in~\Cref{alg:softserve-diag}

One of main changes we have introduced in~\Cref{alg:softserve1} is that we use a \emph{normalized} update, see line~\ref{ln:normalized-kron-update}. In vectorized notation, this normalized update is given by
\begin{align}
    x_{t+1} = x_t - \gamma_t \mat{H}_t \hat{m}_t \,\Big/\sqrt{\hat{m}_t^\top \mat{H}_t \hat{m}_t},
    \label{eq:normalization}
\end{align}
where $x_t = \vecop(\mat{X}_t), \hat{m}_t = \vecop(\hat{\mat{M}}_t) $ and $\mat{H}_t = \mat{L}_t\otimes \mat{R}_t.$ By \Cref{eq:kron-vec,eq:vec-inner-product}, \Cref{eq:normalization} is equivalent to the matrix update on line~\ref{ln:normalized-kron-update} of \Cref{alg:softserve1}.
Normalized updates are often more effective in deep learning~\citep{jordan2024muon,scion,crawshaw2025muonvariants}, which we also confirmed in our initial experiments.

Finally, to amortize the cost of refreshing the curvature
estimate, we hold $\mat{H}_t$ fixed for $K$ training steps. 
At the beginning of each window, we save the parameters,
stochastic gradient, and random draw.
At the end, we reevaluate the gradient using the saved draw,
forming a secant pair from the endpoint differences, see Appendix~\ref{app:training:details} for details.

\section{Related Work}
\label{sec:related}

\paragraph{Stochastic and structured Quasi-Newton.}
Classical QN methods enforce the secant equation while changing the metric as little as possible
\citep{broyden1965convergence,fletcher1970new,goldfarb1970family,shanno1971conditioning,dennis_jr_quasi-newton_1977,nocedal2006numerical}. Stochastic variants keep a dense or limited-memory metric \citep{liu1989limited} and protect it from bad pairs by: using damping \citep{schraudolph_stochastic_2007}, curvature pairs from Hessian-vector products on large batches \citep{byrd_stochastic_2016}, damped updates ~\citep{wang_stochastic_2017}, overlapping batches \citep{berahas_multi-batch_2016},  diagonal rescaling \citep{bordes2009sgd}, or a secant penalty \citep{irwin_secant_2020}. 
K-BFGS \citep{goldfarb2020practical} uses a per-layer Kronecker factorization of BFGS, and relies on damping for a resulting positive definite curvature estimate.

\paragraph{Matrix-aware methods.}
There are now a growing number of methods which are specialized to the matrix parameters in linear layers. 
K-FAC \citep{kfac}, Shampoo \citep{gupta2018shampoo}, and SOAP \citep{vyas2024soap} all make use of Kronecker factorizations to form a preconditioner.  Muon \citep{jordan2024muon,liu2025muon} is a steepest descent method in the matrix spectral norm.

\paragraph{Related variational objectives.}
With the mean held fixed, the covariance subproblem in batch-and-match variational inference \citep{cai2024} has the same form as the soft QN objective in~\Cref{eq:soft-qn-objective}, which is where the closed form in  \Cref{eq:qme-symmetric-main}
originates (see~\Cref{sec:variational-inference} for details). The least squares secant penalty within the soft QN objective~(\Cref{eq:soft-qn-objective})  has also been used to design preconditioned variants of SGD called PSGD in~\citep{li_preconditioned_2018,pooladzandi_curvature-informed_2024}. PSGD
then uses a form of multiplicative SGD applied to the  secant penalty to estimate the preconditioner.

\section{Experiments}
\label{sec:experiments}

We evaluate \SoftServe{} on recurrent sequence learning, image reconstruction, and physics-informed learning, from a small recurrent network to a 136M-parameter diffusion model. We test whether the structured inverse Hessian estimates improve training over first-order and other preconditioned methods. We compare training losses at equal gradient-evaluation budgets, and report prediction errors and wall-clock comparisons in Appendix~\ref{app:experiments}.

\paragraph{Comparison protocol.}
Our baselines are Adam, Muon, SOAP, K-FAC, and K-BFGS(L), with L-BFGS also included for deterministic training. To assess the contribution of the learned inverse Hessian, we include \SoftServe{}-I: the normalized momentum method~\Cref{eq:normalization} obtained by fixing $\mat{H}=\mat{I}$. We tune each method's learning rate separately and evaluate the selected configurations on three fresh seeds. All \SoftServe{} runs  refresh the inverse Hessian estimate every $K=10$ updates. Gradient counts include the additional evaluations used for secant pairs, line searches, and loss balancing. In stochastic training, \SoftServe{} uses one extra gradient evaluation every $K$ updates to form a secant pair on the same batch, see detailed pseudo-code in~\Cref{alg:softserve-kron}. Task-specific settings and learning-rate sweeps are given in Appendix~\ref{app:experiments}.

\subsection{RNN Adding}
\label{sec:rnn}

\setcounter{topnumber}{1}
\begin{figure}[t]
  \centering
  \includegraphics[width=\linewidth]{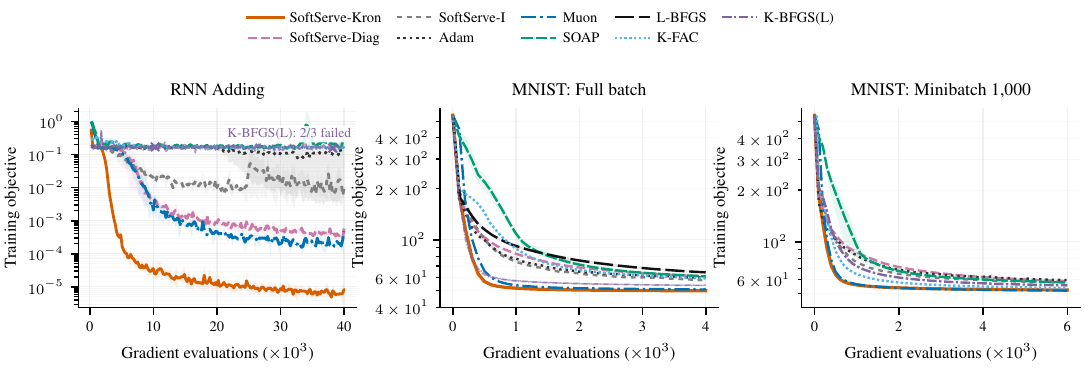}
  \caption{Training objectives against gradient evaluations: prediction MSE for RNN Adding (left), and reconstruction binary cross-entropy with $\ell_2$ regularization for full-batch MNIST (middle) and minibatch MNIST (right). Lines show three-seed means; shading spans the seed minimum and maximum. K-BFGS(L) with failed seeds are shown as individual traces.}
  \label{fig:rnn-main}
  \label{fig:autoencoder-main}
\end{figure}

\begin{figure}[t]
  \centering
  \includegraphics[width=\linewidth]{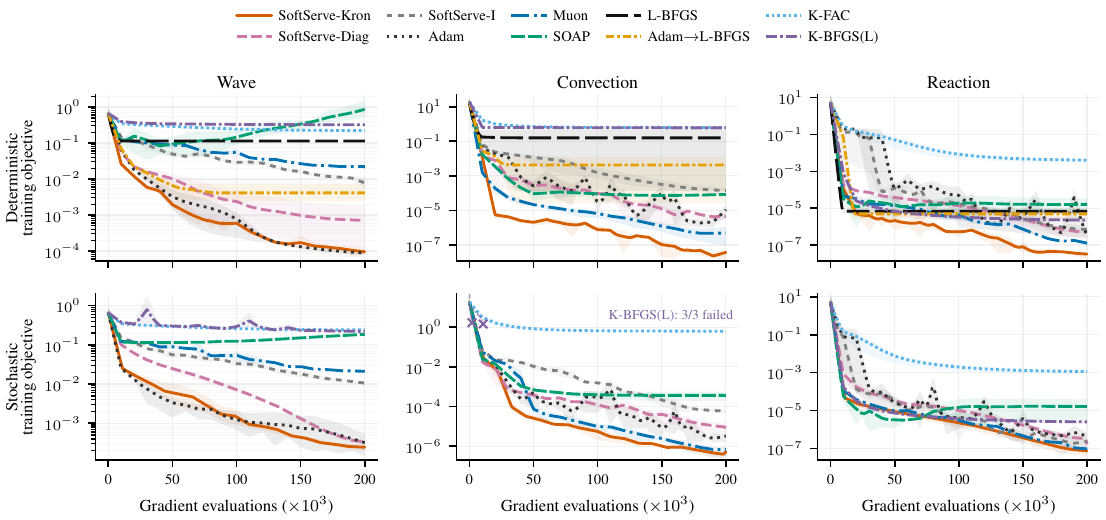}
  \caption{PINN training objectives for Wave, Convection, and Reaction, with fixed points (top) and resampled points (bottom). Lines show three-seed means; shading spans the seed minimum and maximum. Settings with failed seeds are shown as individual traces. Horizontal extensions show the final loss after an L-BFGS run terminates.}
  \label{fig:pinn-main}
\end{figure}

\begin{figure}[t]
  \centering
  \includegraphics[width=\linewidth]{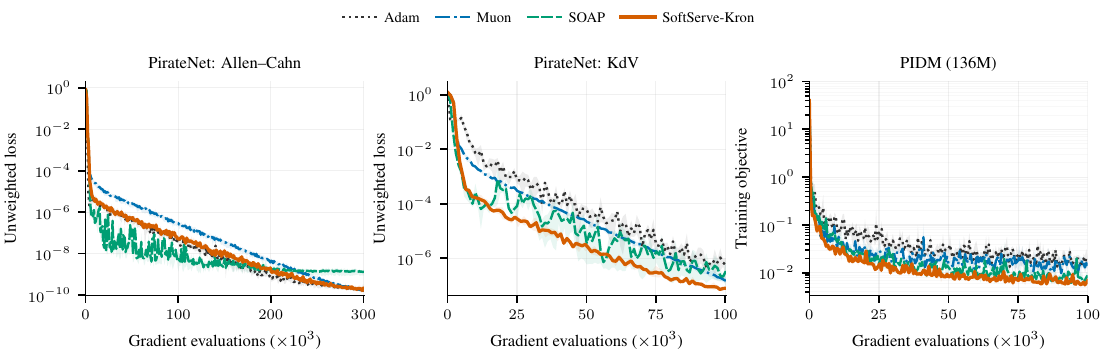}
  \caption{PirateNet on Allen--Cahn (left) and KdV (middle), and the 136M-parameter PIDM (right). PirateNet uses unweighted PDE and initial-condition MSE on fixed points; PIDM uses the training objective on fixed examples and noise. Lines show three-seed means; shading spans the seed minimum and maximum.}
  \label{fig:piratenet-main}
  \label{fig:pidm-main}
\end{figure}

The Adding problem tests a recurrent network's ability to retain information over a long sequence. Each input contains 100 numbers sampled uniformly from $[0,1]$ and markers identifying two positions. The network reads the full sequence and predicts the sum of the two marked numbers. We train a tanh RNN with 16 hidden units by minimizing mean squared error (MSE) on batches of 128 newly sampled sequences.
Although the network is small, repeated applications of the recurrent map can cause gradients to vanish or explode \citep{pascanu2013difficulty}. The full loss Hessian in this example is numerically nearly singular at initialization across four seeds (Appendix~\ref{app:rnn}). This makes Adding an interesting ill-conditioned small-scale test.
 Indeed, in \Cref{fig:rnn-main} we see that preconditioning was necessary to reach a reasonable loss, with 
\SoftServe{}-Kron reaching by far the lowest training loss (\Cref{fig:rnn-main}, left), outperforming our baselines on all three seeds.
\SoftServe{}-Kron mean test MSE is $6.6\times10^{-6}$, compared with $1.5\times10^{-4}$ for Muon and $1.0\times10^{-2}$ for \SoftServe{}-I. 

\subsection{MNIST autoencoder}
\label{sec:autoencoder}

We next consider MNIST reconstruction, a standard benchmark for second-order optimization \citep{hinton2006reducing,martens2010hessianfree,goldfarb2020practical}. A 2.8M-parameter feed-forward autoencoder compresses each $28\times28$ image into a 30-dimensional representation and reconstructs the input. We minimize pixelwise binary cross-entropy with $\ell_2$ regularization, using either all 60,000 training images at each update or minibatches of 1,000. These two settings compare deterministic and stochastic training of the same network. 

\SoftServe{}-Kron reaches the lowest training objective with full batches and is close to Muon with minibatches (\Cref{fig:autoencoder-main}, middle, right). It improves on \SoftServe{}-I in both settings. Appendix~\ref{app:autoencoder} shows held-out reconstruction errors and wall-clock comparisons.

\subsection{Physics-informed neural networks}
\label{sec:pinn}

Physics-informed neural networks (PINNs) approximate solutions to partial differential equations (PDEs) by mapping space--time coordinates to solution values. They are trained by penalizing the PDE residual and errors in the initial and boundary conditions. Depending on the differential operator, this loss can be very ill-conditioned, motivating the use of second-order optimization~\citep{rathore2024}. We evaluate \SoftServe{} on both small feed-forward PINNs and the larger residual architecture PirateNet \citep{wang2024piratenets}.

\paragraph{Small PINNs.}
We use the Wave, Convection, and Reaction problems of \citet{rathore2024}. Each network has three hidden layers of width 200 and 81,201 parameters.  Training minimizes the sum of the mean squared residuals of the governing PDE and its associated initial and boundary conditions. We either fix the sampled training coordinates (collocation points) or resample them at each update. This gives six settings across the three equations.

\SoftServe{}-Kron attains the lowest final training objective in five of the six settings, including all three with resampled points (\Cref{fig:pinn-main}). On deterministic Wave, Adam and Adam+L-BFGS attain lower final losses, with Adam+L-BFGS achieving the lowest. Adam+L-BFGS, used by \citet{rathore2024}, runs Adam for the first 11,000 parameter updates, followed by L-BFGS within the remaining gradient-evaluation budget. Appendix~\ref{app:pinn} gives solution errors and repeats the comparison in double precision.

For our remaining experiments with larger models below, we restrict the baselines to Adam, Muon, and SOAP, based on their scalability, and performance in the preceding experiments.

\paragraph{PirateNet.}
PirateNet is a PINN architecture that combines fully connected layers with skip connections to make deeper networks easier to train. We use a 0.73M-parameter model to solve the Allen--Cahn and Korteweg--de Vries (KdV) equations. Training uses 32,768 collocation points per update and adaptive weights for the loss terms. Since these weights differ across optimizers, we compare the unweighted sum of PDE and initial-condition MSE on the same fixed points.
Against Adam, Muon, and SOAP, \SoftServe{}-Kron is competitive with Muon on Allen--Cahn and reaches the lowest mean loss on KdV (\Cref{fig:piratenet-main}, left and middle). The PDEs and solution errors are given in Appendix~\ref{app:piratenet}.

\subsection{Physics-informed diffusion}
\label{sec:pidm}

To test \SoftServe{} at a larger scale, we train the physics-informed diffusion model (PIDM) of \citet{bastek2025physics} for structural design. Given applied forces, supports, and a material budget, the model learns to place material on a two-dimensional grid to form a stiff structure. We use the 
136M-parameter U-Net and 30,000 optimized training structures. The network reconstructs material-density and displacement fields from noisy inputs. Its objective combines denoising error with penalties for force imbalance, material-volume error, and compliance, a measure of structural flexibility.
At 100,000 gradient evaluations, \SoftServe{}-Kron reaches a lower mean training loss than Adam, Muon, and SOAP (\Cref{fig:pidm-main}, right).

\subsection{Language model}
\label{sec:llm}

Finally, we evaluate a 16M-parameter GPT-style language model \citep{radford2019language}. It has four layers, four attention heads per layer, hidden width 256, and a context length of 512 tokens. We minimize next-token cross-entropy on 200M tokens from FineWeb \citep{penedo2024fineweb}. \SoftServe{}-Kron reaches a mean validation NLL of $4.661$, lower than AdamW ($4.786$) but higher than Muon ($4.440$) and SOAP ($4.416$). Thus, the gains on the preceding tasks do not extend to matching Muon and SOAP on this language model.

\section{Discussion and Limitations}
\label{sec:discussion}

We introduce \SoftServe{},
a quasi-Newton approach for deep learning.
\SoftServe{} considers structured
Hessian approximations that are always positive definite,
GPU-friendly Newton--Schulz approximations, allowing it to scale to massive neural networks.
Our experiments highlight the value of curvature on several ill-conditioned tasks, including
recurrent networks, autoencoders, and physics-informed models, including the 136M-parameter PIDM.

Many future directions remain.
For instance, currently we use a fixed $\lambda$ within each training run. Developing adaptive $\lambda$ would further optimize performance.
Finally,
characterizing convergence guarantees
under realistic settings remains an open problem to study.

\label{page:main-end}

\subsection*{Acknowledgments}
We thank Lawrence Saul for first suggesting that our work on variational inference could be used to develop a quasi-Newton method, and for helpful discussions about this project.


\bibliography{references}

@article{ren2021kronecker,
  title = {Kronecker-factored quasi-{Newton} methods for deep learning},
  author = {Ren, Yi and Bahamou, Achraf and Goldfarb, Donald},
  journal = {arXiv preprint arXiv:2102.06737v3},
  year = {2022},
}

@misc{tieleman2012rmsprop,
  title = {Lecture 6.5---{RMSProp}: Divide the Gradient by a Running Average of Its Recent Magnitude},
  author = {Tieleman, Tijmen and Hinton, Geoffrey},
  howpublished = {Coursera: Neural Networks for Machine Learning},
  year = {2012},
}

@article{bordes2009sgd,
  title = {{SGD-QN}: Careful Quasi-{N}ewton stochastic gradient descent},
  author = {Bordes, Antoine and Bottou, L{\'e}on and Gallinari, Patrick},
  journal = {Journal of Machine Learning Research},
  volume = {10},
  number = {59},
  pages = {1737--1754},
  year = {2009},
}

@book{lancaster1995algebraic,
  title = {Algebraic Riccati Equations},
  author = {Lancaster, Peter and Rodman, Leiba},
  publisher = {Oxford University Press},
  year = {1995},
}

@article{svdnoGPU,
  title = {Efficient algorithms for computing rank-revealing factorizations on a {GPU}},
  author = {Heavner, Nathan and Chen, Chao and Gopal, Abinand and Martinsson, Per-Gunnar},
  journal = {Numerical Linear Algebra with Applications},
  volume = {30},
  number = {6},
  pages = {e2515},
  year = {2023},
}

@inproceedings{goldfarb2020practical,
  title = {Practical quasi-{N}ewton methods for training deep neural networks},
  author = {Goldfarb, Donald and Ren, Yi and Bahamou, Achraf},
  booktitle = {Advances in Neural Information Processing Systems (NeurIPS)},
  year = {2020},
}

@inproceedings{modi2024batch,
  title = {Batch, match, and patch: low-rank approximations for score-based variational inference},
  author = {Modi, Chirag and Cai, Diana and Saul, Lawrence K},
  booktitle = {International Conference on Artificial Intelligence and Statistics (AISTATS)},
  year = {2025},
}

@inproceedings{cai2024,
  title = {Batch and match: black-box variational inference with a score-based divergence},
  author = {Cai, Diana and Modi, Chirag and Pillaud-Vivien, Loucas and Margossian, Charles and Gower, Robert M. and Blei, David and Saul, Lawrence K.},
  booktitle = {International Conference on Machine Learning (ICML)},
  year = {2024},
}

@book{nocedal2006numerical,
  title = {Numerical {O}ptimization},
  author = {Nocedal, Jorge and Wright, Stephen J},
  publisher = {Springer},
  edition = {2nd},
  year = {2006},
}

@article{berglund2024,
  title = {Soft quasi-{Newton}: guaranteed positive definiteness by relaxing the secant constraint},
  author = {Berglund, Erik and Zhang, Jiaojiao and Johansson, Mikael},
  journal = {Optimization Methods and Software},
  volume = {40},
  number = {4},
  pages = {783--812},
  year = {2025},
}

@article{FletcherKL,
  title = {A New Variational Result for Quasi-{N}ewton Formulae},
  author = {Fletcher, R.},
  journal = {SIAM Journal on Optimization},
  volume = {1},
  number = {1},
  pages = {18--21},
  year = {1991},
}

@article{broyden1965convergence,
  title = {The Convergence of a Class of Double-rank Minimization Algorithms: 2. The New Algorithm},
  author = {Broyden, C. G.},
  journal = {IMA Journal of Applied Mathematics},
  volume = {6},
  number = {3},
  pages = {222--231},
  year = {1970},
}

@article{fletcher1970new,
  title = {A New Approach to Variable Metric Algorithms},
  author = {Fletcher, R.},
  journal = {The Computer Journal},
  volume = {13},
  number = {3},
  pages = {317--322},
  year = {1970},
}

@article{goldfarb1970family,
  title = {A Family of Variable-metric Methods Derived by Variational Means},
  author = {Goldfarb, Donald},
  journal = {Mathematics of Computation},
  volume = {24},
  number = {109},
  pages = {23--26},
  year = {1970},
}

@article{shanno1971conditioning,
  title = {Conditioning of Quasi-{Newton} Methods for Function Minimization},
  author = {Shanno, D. F.},
  journal = {Mathematics of Computation},
  volume = {24},
  number = {111},
  pages = {647--656},
  year = {1970},
}

@article{liu1989limited,
  title = {On the Limited Memory {BFGS} Method for Large Scale Optimization},
  author = {Liu, Dong C. and Nocedal, Jorge},
  journal = {Mathematical Programming},
  volume = {45},
  number = {1-3},
  pages = {503--528},
  year = {1989},
}

@book{horn2012matrix,
  title = {Matrix {A}nalysis},
  author = {Horn, Roger A and Johnson, Charles R},
  publisher = {Cambridge University Press},
  edition = {2nd},
  year = {2012},
}

@inproceedings{rathore2024,
  title = {Challenges in Training {PINN}s: A Loss Landscape Perspective},
  author = {Rathore, Pratik and Lei, Weimu and Frangella, Zachary and Lu, Lu and Udell, Madeleine},
  booktitle = {International Conference on Machine Learning (ICML)},
  year = {2024},
}

@inproceedings{gupta2018shampoo,
  title = {{Shampoo}: Preconditioned Stochastic Tensor Optimization},
  author = {Gupta, Vineet and Koren, Tomer and Singer, Yoram},
  booktitle = {International Conference on Machine Learning (ICML)},
  year = {2018},
}

@misc{jordan2024muon,
  title = {Muon: An optimizer for hidden layers in neural networks},
  author = {Jordan, Keller and Jin, Yuchen and Boza, Vlado and You, Jiacheng and Cesista, Franz and Newhouse, Laker and Bernstein, Jeremy},
  howpublished = {Keller Jordan blog},
  year = {2024},
}

@article{hinton2006reducing,
  title = {Reducing the dimensionality of data with neural networks},
  author = {Hinton, Geoffrey E. and Salakhutdinov, Ruslan R.},
  journal = {Science},
  volume = {313},
  number = {5786},
  pages = {504--507},
  year = {2006},
}

@inproceedings{martens2010hessianfree,
  title = {Deep learning via {Hessian}-free optimization},
  author = {Martens, James},
  booktitle = {International Conference on Machine Learning (ICML)},
  year = {2010},
}

@inproceedings{berahas_multi-batch_2016,
  title = {A Multi-Batch {L}-{BFGS} Method for Machine Learning},
  author = {Berahas, Albert S and Nocedal, Jorge and Takac, Martin},
  booktitle = {Advances in Neural Information Processing Systems (NeurIPS)},
  year = {2016},
}

@article{byrd_stochastic_2016,
  title = {A Stochastic Quasi-{Newton} Method for Large-Scale Optimization},
  author = {Byrd, Richard H. and Hansen, S. L. and Nocedal, Jorge and Singer, Yoram},
  journal = {SIAM Journal on Optimization},
  volume = {26},
  number = {2},
  pages = {1008--1031},
  year = {2016},
}

@inproceedings{scion,
  title = {Training deep learning models with norm-constrained {LMOs}},
  author = {Pethick, Thomas and Xie, Wanyun and Antonakopoulos, Kimon and Zhu, Zhenyu and Silveti-Falls, Antonio and Cevher, Volkan},
  booktitle = {International Conference on Machine Learning (ICML)},
  year = {2025},
}

@inproceedings{crawshaw2025muonvariants,
  title = {An Exploration of Non-{Euclidean} Gradient Descent: {Muon} and its Many Variants},
  author = {Michael Crawshaw and Chirag Modi and Mingrui Liu and Robert M. Gower},
  booktitle = {International Conference on Machine Learning (ICML)},
  year = {2026},
}

@article{dennis_jr_quasi-newton_1977,
  title = {Quasi-{Newton} {Methods}, {Motivation} and {Theory}},
  author = {Dennis, Jr., J. E. and Moré, Jorge J.},
  journal = {SIAM Review},
  volume = {19},
  number = {1},
  pages = {46--89},
  year = {1977},
}

@article{duchi_adaptive_2011,
  title = {Adaptive {Subgradient} {Methods} for {Online} {Learning} and {Stochastic} {Optimization}},
  author = {Duchi, John and Hazan, Elad and Singer, Yoram},
  journal = {Journal of Machine Learning Research},
  volume = {12},
  number = {61},
  pages = {2121--2159},
  year = {2011},
}

@article{higham_stable_1997,
  title = {Stable Iterations for the Matrix Square Root},
  author = {Nicholas J. Higham},
  journal = {Numerical Algorithms},
  volume = {15},
  number = {2},
  pages = {227--242},
  year = {1997},
}

@book{higham_functions_2008,
  title = {Functions of {Matrices}: {Theory} and {Computation}},
  author = {Higham, Nicholas J},
  publisher = {Society for Industrial and Applied Mathematics},
  year = {2008},
}

@article{irwin_secant_2020,
  title = {Secant penalized {BFGS}: a noise robust quasi-{Newton} method via penalizing the secant condition},
  author = {Irwin, Brian and Haber, Eldad},
  journal = {Computational Optimization and Applications},
  volume = {84},
  number = {3},
  pages = {651--702},
  year = {2023},
}

@inproceedings{kingma_adam_2015,
  title = {Adam: A Method for Stochastic Optimization},
  author = {Kingma, Diederik P. and Ba, Jimmy},
  booktitle = {International Conference on Learning Representations (ICLR)},
  year = {2015},
}

@article{li_preconditioned_2018,
  title = {Preconditioned {Stochastic} {Gradient} {Descent}},
  author = {Li, Xi-Lin},
  journal = {IEEE Transactions on Neural Networks and Learning Systems},
  volume = {29},
  number = {5},
  pages = {1454--1466},
  year = {2018},
}

@inproceedings{PSGD-Lie,
  title = {Preconditioner on Matrix {Lie} Group for {SGD}},
  author = {Li, Xi-Lin},
  booktitle = {International Conference on Learning Representations (ICLR)},
  year = {2019},
}

@article{liu2025muon,
  title = {Muon is Scalable for {LLM} Training},
  author = {Liu, Jingyuan and Su, Jianlin and Yao, Xingcheng and Jiang, Zhejun and others},
  journal = {arXiv preprint arXiv:2502.16982},
  year = {2025},
}

@article{Wolfe1969,
author = {Wolfe, Philip},
title = {Convergence Conditions for Ascent Methods},
journal = {SIAM Review},
volume = {11},
number = {2},
pages = {226-235},
year = {1969},
doi = {10.1137/1011036}
}

@inproceedings{kfac,
  title = {Optimizing Neural Networks with {Kronecker}-factored Approximate Curvature},
  author = {Martens, James and Grosse, Roger},
  booktitle = {International Conference on Machine Learning (ICML)},
  year = {2015},
}

@article{pooladzandi_curvature-informed_2024,
  title = {Curvature-Informed {SGD} via General Purpose {Lie}-Group Preconditioners},
  author = {Pooladzandi, Omead and Li, Xi-Lin},
  journal = {arXiv preprint arXiv:2402.04553},
  year = {2024},
}

@inproceedings{schraudolph_stochastic_2007,
  title = {A {Stochastic} {Quasi}-{Newton} {Method} for {Online} {Convex} {Optimization}},
  author = {Schraudolph, Nicol N. and Yu, Jin and G{\"u}nter, Simon},
  booktitle = {International Conference on Artificial Intelligence and Statistics (AISTATS)},
  year = {2007},
}

@article{schulz_iterative_1933,
  title = {Iterative {Berechnung} der reziproken {Matrix}},
  author = {Schulz, Günther},
  journal = {ZAMM - Zeitschrift für Angewandte Mathematik und Mechanik},
  volume = {13},
  number = {1},
  pages = {57--59},
  year = {1933},
}

@inproceedings{polarexpress,
  title = {The {Polar Express}: Optimal Matrix Sign Methods and their Application to the {Muon} Algorithm},
  author = {Amsel, Noah and Persson, David and Musco, Christopher and Gower, Robert M.},
  booktitle = {International Conference on Learning Representations (ICLR)},
  year = {2026},
}

@inproceedings{vyas2024soap,
  title = {{SOAP}: Improving and Stabilizing {Shampoo} using {Adam} for Language Modeling},
  author = {Vyas, Nikhil and Morwani, Depen and Zhao, Rosie and Shapira, Itai and Brandfonbrener, David and Janson, Lucas and Kakade, Sham},
  booktitle = {International Conference on Learning Representations (ICLR)},
  year = {2025},
}

@article{wang_stochastic_2017,
  title = {Stochastic Quasi-{Newton} Methods for Nonconvex Stochastic Optimization},
  author = {Wang, Xiao and Ma, Shiqian and Goldfarb, Donald and Liu, Wei},
  journal = {SIAM Journal on Optimization},
  volume = {27},
  number = {2},
  pages = {927--956},
  year = {2017},
}

@inproceedings{bastek2025physics,
  title = {Physics-Informed Diffusion Models},
  author = {Bastek, Jan-Hendrik and Sun, WaiChing and Kochmann, Dennis M.},
  booktitle = {International Conference on Learning Representations (ICLR)},
  year = {2025},
}

@inproceedings{xu2025fp64,
  title = {{FP64} is All You Need: Rethinking Failure Modes in Physics-Informed Neural Networks},
  author = {Xu, Chenhui and Liu, Dancheng and Nassereldine, Amir and Xiong, Jinjun},
  booktitle = {Advances in Neural Information Processing Systems (NeurIPS)},
  year = {2025},
}

@article{wang2024piratenets,
  title = {{PirateNets}: Physics-informed Deep Learning with Residual Adaptive Networks},
  author = {Wang, Sifan and Li, Bowen and Chen, Yuhan and Perdikaris, Paris},
  journal = {Journal of Machine Learning Research},
  volume = {25},
  number = {402},
  pages = {1--51},
  year = {2024},
}

@inproceedings{wang2025gradient,
  title = {Gradient Alignment in Physics-informed Neural Networks: A Second-Order Optimization Perspective},
  author = {Wang, Sifan and Bhartari, Ananyae Kumar and Li, Bowen and Perdikaris, Paris},
  booktitle = {Advances in Neural Information Processing Systems (NeurIPS)},
  year = {2025},
}

@techreport{radford2019language,
  title = {Language Models are Unsupervised Multitask Learners},
  author = {Radford, Alec and Wu, Jeffrey and Child, Rewon and Luan, David and Amodei, Dario and Sutskever, Ilya},
  institution = {OpenAI},
  year = {2019},
}

@inproceedings{penedo2024fineweb,
  title = {The {FineWeb} Datasets: Decanting the Web for the Finest Text Data at Scale},
  author = {Penedo, Guilherme and Kydl\'{\i}\v{c}ek, Hynek and Ben allal, Loubna and Lozhkov, Anton and Mitchell, Margaret and Raffel, Colin and {Von Werra}, Leandro and Wolf, Thomas},
  booktitle = {Advances in Neural Information Processing Systems (NeurIPS)},
  year = {2024},
}

@inproceedings{pascanu2013difficulty,
  title = {On the difficulty of training recurrent neural networks},
  author = {Pascanu, Razvan and Mikolov, Tomas and Bengio, Yoshua},
  booktitle = {International Conference on Machine Learning (ICML)},
  year = {2013},
}
\bibliographystyle{iclr2027_conference}

\clearpage
\appendix

\numberwithin{equation}{section}
\numberwithin{figure}{section}
\numberwithin{table}{section}
\numberwithin{algorithm}{section}
\renewcommand{\theHequation}{\theHsection.\arabic{equation}}
\renewcommand{\theHfigure}{\theHsection.\arabic{figure}}
\renewcommand{\theHtable}{\theHsection.\arabic{table}}
\renewcommand{\theHalgorithm}{\theHsection.\arabic{algorithm}}
\etocdepthtag.toc{appendix}
\renewcommand{\contentsname}{Appendix contents}
\etocsettagdepth{main}{none}
\etocsettagdepth{appendix}{all}
\etocsetnexttocdepth{subsection}
\tableofcontents

\clearpage

\section{BFGS and Soft quasi-Newton}

\subsection{BFGS}
\label{app:bfgs}

Typically, the BFGS update
is derived via a weighted Frobenius norm objective
that seeks to find the closest symmetric matrix
to the current iterate's quasi-Newton matrix
such that the secant equation is satisfied \citep[Chapter~6]{nocedal2006numerical}.
That is,
\begin{align}
\label{eq:frobnorm}
   \mat{H}_{t+1}= 
   \argmin_{\mat{H} \in \Splusplus} \|\mat{H} -\mat{H}_t\|_{F(\mat{W})} \quad \text{~subject~to~}\quad  \mat{H}=\mat{H}^\top, \quad \mat{H} y_t = s_t,
\end{align}
where the norm above is the weighted
Frobenius norm
$\|\mat{M}\|_{F(\mat{W})} = \norm[\mathrm F]{\mat{W}^\frac{1}{2} \mat{M} \mat{W}^\frac{1}{2}}$  
with a weighting matrix~$\mat{W}$
that satisfies the secant equation (e.g., $\mat{W} = \mat{B}_t$).

The BFGS update is given by
\begin{align}
\boxed{
    \mat{H}_{t+1}^{\text{BFGS}} 
    = \left(\mat{I} - \frac{s_t y_t^\top}{y_t^\top s_t}\right) \mat{H}_t
    \left(\mat{I} - \frac{y_t s_t^\top}{y_t^\top s_t}\right)
    +
\frac{s_t s_t^\top}{y_t^\top s_t}.
}
\label{eq:BFGS}
\end{align}

The log-determinant formulation of 
\citet{FletcherKL}
given in
 \Cref{eq:bfgs-inverse-variational}
recovers the same update, providing the starting point for replacing the secant equation with a soft penalty.

\subsection{Soft quasi-Newton}
\label{app:soft-quasi-newton}
As discussed in the main text,
Soft QN \citep{berglund2024}
then considers the soft penalty, regularized by the log-determinant divergence:
the Soft QN algorithm then approximates the inverse Hessian by solving
\begin{align}
\label{eq:softqn_objective_func}
\mat{H}_{t+1}=\underset{\mat{H}\in\Splusplus^D}{\arg\min}~
       \norm[\mat{H}^{-1}]{s_t-\mat{H}y_t}^2
 + \tfrac{1}{\lambda}\D(\mat{H}_t; \mat{H}).
\end{align}
\citet{berglund2024}
show that this objective
has the 
unique, positive definite
solution
\begin{equation}
\boxed{
 \mat{H}_{t+1}=
 \mat{V}_t
 -\frac{4\lambda \mat{V}_t \, y_ty_t^\top \, \mat{V}_t}
 {\big(1+\sqrt{1+4\lambda y_t^\top \mat{V}_ty_t}\big)^2}, 
 }
 \label{eq:dense-soft-qn-closed-form}
\end{equation}
 where $\mat{V}_t := \mat{H}_t + \lambda s_t s_t^\top$.
 Furthermore,
\citet[Theorem~3.2]{berglund2024} shows that
 the BFGS update is recovered
as a limiting case:
under positive curvature 
\(s_t^\top y_t>0\), 
then
\begin{align}
\boxed{
\mat{H}_{t+1}\to\mat{H}_{t+1}^{\rm BFGS},
\qquad \text{as}\quad \lambda\to\infty.}
\end{align}

\subsection{Connections with Variational inference}
\label{sec:variational-inference}
For Gaussians with a common mean \(\mu\), the log-determinant divergence
defined in \Cref{eq:ld-divergence} equals twice the KL divergence, that is
\[
\mathscr D(\mat{P};\mat{Q})=2\,\mathrm{KL}(\mathcal N(\mu,\mat{P})\,\|\,\mathcal N(\mu,\mat{Q})).
\]
Let \(\pi=\mathcal N(\mu,\mat{H})\), \(\pi_t=\mathcal N(\mu,\mat{H}_t)\), and let \(p\) be a differentiable positive density. With \(s=\mu-z\) and \(y=\nabla_z\log p(z)\), the Gaussian score is \(\nabla_z\log\pi(z)=\mat{H}^{-1}s\), so
\begin{equation}
 \norm[\mat{H}]{\nabla_z\log\pi(z)-y}^2=\norm[\mat{H}^{-1}]{s-\mat{H}y}^2,\qquad
 \tfrac{2}{\lambda}\mathrm{KL}(\pi_t\|\pi)=\tfrac{1}{\lambda}\mathscr D(\mat{H}_t; \mat{H}).
 \label{eq:score-substitution}
\end{equation}
Thus, the objective in ~\Cref{eq:softqn_objective_func} is the one-score covariance objective of \citet{cai2024} with the mean fixed. The identity links the two covariance subproblems; it does not equate variational inference with quasi-Newton optimization.

\section{\SoftServe{}: structured updates}
\label{app:proofs}

Recall that \SoftServe{}
solves the soft quasi-Newton objective
with respect to a structured matrix family~$\mathcal{H}$ (diagonal or Kronecker factored):
\begin{align} 
\label{eq:softserve-obj}
\boxed{
\mat{H}_{t+1}=\underset{\mat{H}\in\mathcal{H}}{\arg\min}~
       \norm[\mat{H}^{-1}]{s_t-\mat{H}y_t}^2
 + \tfrac{1}{\lambda}\D(\mat{H}_t; \mat{H}).
 }
\end{align}


\subsection{Diagonal update}

We begin by considering the diagonal family
\begin{align*}
\mathcal H_{\rm Diag} &=\{\diag(h):h\in\R_{++}^D\}.
\end{align*}

The lemma below derives the update.

\begin{restatable}[Exact diagonal update]{lemma}{diagupdatelemma}
\label{lem:diag-update}
Let \(h_t\in\R_{++}^D\), \(\mat{H}_t=\diag(h_t)\), \(s_t,y_t\in\R^D\), and \(\lambda>0\). The diagonal restriction of~\Cref{eq:softserve-obj} has the unique minimizer \(\mat{H}_{t+1}=\diag(h_{t+1})\), where
\begin{equation} \label{eq:diag-update}
 h_{t+1,i}=
 \frac{2\left(h_{t,i}+\lambda s_{t,i}^2\right)}
 {1+\sqrt{1+4\lambda y_{t,i}^2
                  \left(h_{t,i}+\lambda s_{t,i}^2\right)}},
 \qquad i=1,\ldots,D.
\end{equation}
Hence \(h_{t+1}\in\R_{++}^D\), regardless of the signs of \(s_{t,i}y_{t,i}\).
\end{restatable}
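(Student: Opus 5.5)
Substitute $\mat{H}=\diag(h)$ with $h\in\R_{++}^D$ into the objective of \Cref{eq:softserve-obj} and show that it separates into a sum of scalar problems, one per coordinate. For the secant penalty, $\norm[\mat{H}^{-1}]{s_t-\mat{H}y_t}^2=\sum_i (s_{t,i}-h_iy_{t,i})^2/h_i = \sum_i\big(s_{t,i}^2/h_i - 2s_{t,i}y_{t,i} + h_i y_{t,i}^2\big)$. For the log-determinant divergence \Cref{eq:ld-divergence}, $\D(\mat{H}_t;\mat{H})=\sum_i\big(h_{t,i}/h_i+\log h_i-\log h_{t,i}-1\big)$. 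Dropping constants and multiplying by $\lambda>0$, the $i$-th coordinate minimizes
\begin{equation*}
\phi_i(h)=\lambda y_{t,i}^2\,h+\frac{h_{t,i}+\lambda s_{t,i}^2}{h}+\log h,\qquad h>0.
\end{equation*}
This is exactly the scalar ($d=1$) case of \Cref{eq:factorLupdate-main}, with $u=\lambda y_{t,i}^2\ge0$ and $v=h_{t,i}+\lambda s_{t,i}^2>0$. Since the problems are decoupled, the diagonal restriction has a unique minimizer iff each $\phi_i$ does, and the minimizers are then obtained coordinatewise.

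To handle each scalar problem, change variables to $p=1/h>0$. Then $\phi_i=u/p+vp-\log p$, which is strictly convex on $(0,\infty)$: $u/p$ is convex because $u\ge0$, $vp$ is linear, and $-\log p$ is strictly convex. This mirrors the paper's remark that the objective is strictly convex in $\mat{H}^{-1}$. For existence, note that as $h\to\infty$ we have $\phi_i\to\infty$ through $\log h$ (and through $uh$ when $u>0$). As $h\to0^+$, $v/h$ dominates $\log h$ because $v>0$, so again $\phi_i\to\infty$. Hence $\phi_i$ is coercive on $(0,\infty)$, a minimizer exists, and it is unique by strict convexity. It is therefore characterized by the stationarity condition $\phi_i'(h)=u-v/h^2+1/h=0$, equivalently $uh^2+h-v=0$, the scalar instance of \Cref{eq:factorA-main}.

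Solve the quadratic next. If $u>0$, the positive root is $h=\frac{-1+\sqrt{1+4uv}}{2u}$; if $u=0$, the equation gives $h=v$ directly. Rationalizing by multiplying numerator and denominator by $1+\sqrt{1+4uv}$ gives $h=\frac{2v}{1+\sqrt{1+4uv}}$, which is valid in both cases. Substituting $u$ and $v$ yields \Cref{eq:diag-update}. Positivity is immediate: $v>0$ because $h_{t,i}>0$, and the denominator is at least $2$. Nothing in the argument used the sign of $s_{t,i}y_{t,i}$; the cross term $-2s_{t,i}y_{t,i}$ is constant in $h$ and drops out. This gives the final claim.

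The main point requiring care is the existence/uniqueness step, in particular the degenerate case $y_{t,i}=0$. There coercivity as $h\to\infty$ comes only from $\log h$, and the quadratic collapses to a linear equation, so I will treat $u=0$ explicitly rather than dividing by $u$. The rest is routine algebra.
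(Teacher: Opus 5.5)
Your proposal is correct and follows the paper's proof essentially step for step: you separate the objective by coordinate, pass to the reciprocal variable $1/h_i$ to get strict convexity together with coercivity at both ends of $(0,\infty)$, and take the positive root of $\lambda y_{t,i}^2 h^2 + h - v = 0$ in rationalized form, which also covers $y_{t,i}=0$. The only cosmetic differences are that you argue convexity term by term where the paper computes the second derivative, and you treat $u=0$ as a separate case where the paper simply notes the formula covers it.
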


\begin{proof}
For \(\mat{H}=\diag(h)\), the terms of \Cref{eq:softserve-obj} that depend on \(h\) are
\[
 \sum_{i=1}^D\left[
 \frac{s_{t,i}^2}{h_i}+h_iy_{t,i}^2+
 \frac{1}{\lambda}\left(\frac{h_{t,i}}{h_i}+\log h_i\right)\right].
\]
Fix \(i\), set \(v=h_{t,i}+\lambda s_{t,i}^2>0\) and \(b=h_i^{-1}\), and multiply by \(\lambda\). The scalar problem is
\[
\begin{array}{cc}
 \underset{b>0}{\mbox{minimize}} &
\psi(b),\qquad
 \psi(b)=vb+\frac{\lambda y_{t,i}^2}{b}-\log b.
\end{array}
\]
Since \(\psi''(b)=2\lambda y_{t,i}^2/b^3+1/b^2>0\) and
\(\psi(b)\to+\infty\) as \(b\downarrow0\) or \(b\to\infty\),
the minimizer exists and is unique.
The stationarity condition \(\lambda y_{t,i}^2h_i^2+h_i-v=0\) has the single positive root
\[
 h_i=\frac{2v}{1+\sqrt{1+4\lambda y_{t,i}^2v}},
\]
which also covers \(y_{t,i}=0\). Separability over \(i\) proves the claim.
\end{proof}

\subsection{Kronecker update}
\label{asec:kronecker}

We state standard identities for Kronecker products \citep{horn2012matrix}.
\begin{lemma}
For \(\mat{P}\in\R^{m\times m}\), \(\mat{Q}\in\R^{n\times n}\), and conformable \(\mat{X},\mat{Z}\), we have:
\begin{align}
 (\mat{P}_1\otimes \mat{Q}_1)(\mat{P}_2\otimes \mat{Q}_2)&=(\mat{P}_1\mat{P}_2)\otimes(\mat{Q}_1\mat{Q}_2) \label{eq:kron-mixed-product}
 \\
 (\mat{P}\otimes \mat{Q})\vecop(\mat{X})&=\vecop(\mat{Q}\mat{X}\mat{P}^\top)\label{eq:kron-vec}\\
 \vecop(\mat{X})^\top\vecop(\mat{Z})&=\tr(\mat{X}^\top \mat{Z})\label{eq:vec-inner-product}
 \\
 \tr(\mat{P}\otimes \mat{Q})&=\tr(\mat{P})\tr(\mat{Q})\label{eq:kron-trace}
 \\
 \det(\mat{P}\otimes \mat{Q}) &=\det(\mat{P})^n\det(\mat{Q})^m.
 \label{eq:kron-det}
\end{align}
\end{lemma}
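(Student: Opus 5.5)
I would verify each identity directly from the block definition of the Kronecker product given in \Cref{sec:kron}: the \((i,j)^{\text{th}}\) block of \(\mat{P}\otimes\mat{Q}\) is \(p_{ij}\mat{Q}\). I would also use the convention that \(\vecop(\cdot)\) stacks columns. In \Cref{eq:kron-vec} this forces \(\mat{X}\in\R^{n\times m}\) with columns \(x_1,\dots,x_m\in\R^n\). The order I would follow is: mixed product, then the vec identity, the inner product, the trace, and finally the determinant, which reuses the mixed-product rule.

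For \Cref{eq:kron-mixed-product} I would do block matrix multiplication. The \((i,k)\) block of the product is \(\sum_j (p^{(1)}_{ij}\mat{Q}_1)(p^{(2)}_{jk}\mat{Q}_2)\). This equals \((\mat{P}_1\mat{P}_2)_{ik}\,\mat{Q}_1\mat{Q}_2\), which is exactly the \((i,k)\) block of \((\mat{P}_1\mat{P}_2)\otimes(\mat{Q}_1\mat{Q}_2)\).

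For \Cref{eq:kron-vec} I would compute the \(i^{\text{th}}\) block row of \((\mat{P}\otimes\mat{Q})\vecop(\mat{X})\). It is \(\sum_j p_{ij}\mat{Q}x_j=\mat{Q}\bigl(\sum_j x_j p_{ij}\bigr)\), and \(\sum_j x_j p_{ij}\) is the \(i^{\text{th}}\) column of \(\mat{X}\mat{P}^\top\). So this block row is the \(i^{\text{th}}\) column of \(\mat{Q}\mat{X}\mat{P}^\top\), and stacking the block rows gives \(\vecop(\mat{Q}\mat{X}\mat{P}^\top)\).

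\Cref{eq:vec-inner-product} follows because both sides equal \(\sum_{k,l}x_{kl}z_{kl}\). For \Cref{eq:kron-trace}, the diagonal blocks of \(\mat{P}\otimes\mat{Q}\) are \(p_{ii}\mat{Q}\), so the trace is \(\sum_i p_{ii}\tr(\mat{Q})=\tr(\mat{P})\tr(\mat{Q})\).

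The determinant identity \Cref{eq:kron-det} is the only step needing more than bookkeeping, and I expect it to be the main obstacle. I would work over \(\mathbb{C}\) and use Schur triangularization: write \(\mat{P}=\mat{U}\mat{T}_P\mat{U}^*\) and \(\mat{Q}=\mat{W}\mat{T}_Q\mat{W}^*\), with \(\mat{T}_P,\mat{T}_Q\) upper triangular and \(\mat{U},\mat{W}\) unitary. Applying \Cref{eq:kron-mixed-product} twice gives
\[
\mat{P}\otimes\mat{Q}=(\mat{U}\otimes\mat{W})(\mat{T}_P\otimes\mat{T}_Q)(\mat{U}\otimes\mat{W})^*,
\]
and the mixed-product rule also shows \(\mat{U}\otimes\mat{W}\) is unitary. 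Next, \(\mat{T}_P\otimes\mat{T}_Q\) is upper triangular: blocks below the diagonal are \(0\cdot\mat{T}_Q\), and the diagonal blocks \(t^P_{ii}\mat{T}_Q\) are upper triangular. Its diagonal entries are \(\mu_i\nu_k\), where \(\mu_i\) and \(\nu_k\) are the eigenvalues of \(\mat{P}\) and \(\mat{Q}\). Therefore
\[
\det(\mat{P}\otimes\mat{Q})=\prod_{i=1}^m\prod_{k=1}^n\mu_i\nu_k=\Bigl(\prod_i\mu_i\Bigr)^{n}\Bigl(\prod_k\nu_k\Bigr)^{m}=\det(\mat{P})^n\det(\mat{Q})^m.
\]
The same triangularization re-derives \Cref{eq:kron-trace} as a consistency check.

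If I wanted to avoid complex Schur forms, I would instead factor \(\mat{P}\otimes\mat{Q}=(\mat{P}\otimes\mat{I}_n)(\mat{I}_m\otimes\mat{Q})\) via \Cref{eq:kron-mixed-product}. The second factor is block diagonal, so its determinant is \(\det(\mat{Q})^m\). The first factor is permutation-similar to \(\mat{I}_n\otimes\mat{P}\) through the commutation (perfect-shuffle) matrix, so its determinant is \(\det(\mat{P})^n\). Justifying that similarity is the delicate point of this alternative, which is why I would prefer the triangularization route.
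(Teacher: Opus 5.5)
Your proposal is correct. The paper gives no argument of its own; it simply cites these as standard identities from Horn and Johnson, and your block-multiplication, block-row and Schur-triangularization derivations are the standard textbook proofs, so they just make the cited facts self-contained. The one fact you use that is not on the list, $(\mat{U}\otimes\mat{W})^*=\mat{U}^*\otimes\mat{W}^*$, is immediate from the block definition, and you could skip it: the mixed-product rule already gives $(\mat{U}\otimes\mat{W})^{-1}=\mat{U}^{-1}\otimes\mat{W}^{-1}$, and a similarity preserves the determinant.
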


We now consider the Kronecker-structured
family:
\begin{align}
 \mathcal H_{\rm Kron} &=\{\mat{L}\otimes \mat{R}:\mat{L}\in\mathbb S_{++}^m,\mat{R}\in\mathbb S_{++}^n\}.
\end{align}
Here, we derive the alternating update from \Cref{sec:kron}.
First, we recall some notation.
Write \(s_t=\vecop(\mat{S}_t)\), \(y_t=\vecop(\mat{Y}_t)\), and
\(\mat{H}_t=\mat{L}_t\otimes\mat{R}_t\).
The following result establishes
the properties of 
these conditional updates.

\begin{lemma}[Alternating Kronecker solution.]
   \label{lem:kron-sweep}
Let \(\mat{L}_t\in\mathbb S_{++}^m\), \(\mat{R}_t\in\mathbb S_{++}^n\), \(\lambda>0\), and \(\mat{S}_t,\mat{Y}_t\in\R^{n \times m}\). Consider the objective 
given in~\Cref{eq:soft-qn-objective}
and the updates
    \begin{align}
\mat{L}_{t+1} &=
    \argmin_{\mat{L} \in \Splusplus^m} \quad\tr(\mat{U}_L \mat{L}) + \tr(\mat{V}_L \mat{L}^{-1}) + \log\det \mat{L}\label{eq:kron-l-subproblem}
\\
\mat{R}_{t+1} &=
    \argmin_{\mat{R} \in \Splusplus^n} \quad\tr(\mat{U}_R \mat{R}) + \tr(\mat{V}_R \mat{R}^{-1}) + \log\det\mat{R},\label{eq:kron-r-subproblem}
\end{align}
where we define the
auxiliary matrices:
\begin{align} 
    \mat{U}_L &:= \tfrac{\lambda}{n} \mat{Y}_t^\top \mat{R}_t \mat{Y}_t \quad \mbox{~~and} \quad \mat{V}_L := \mat{L}_t + \tfrac{\lambda}{n} \mat{S}_t^\top \mat{R}_t^{-1} \mat{S}_t\label{eq:ULVL-app}
    \\
    \mat{U}_R &:= \tfrac{\lambda}{m} \mat{Y}_t \mat{L}_{t+1} \mat{Y}_t^\top \quad \mbox{and} 
\quad \mat{V}_R := m^{-1} [\tr(\mat{L}_t \mat{L}_{t+1}^{-1}) \mat{R}_t + \lambda \mat{S}_t \mat{L}_{t+1}^{-1} \mat{S}_t^\top].\label{eq:URVR-app}
\end{align}
These updates have a unique, closed-form solution that costs $\mathcal{O}(n^3+m^3+n^2m+nm^2)$ to compute exactly.
Furthermore, the updates to $\mat{L}$ and $\mat{R}$ monotonically decrease the soft QN objective.
\end{lemma}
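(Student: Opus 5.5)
We plan to substitute $\mat{H}=\mat{L}\otimes\mat{R}$ into the soft QN objective of \Cref{eq:softserve-obj}, expand it with the Kronecker identities \Cref{eq:kron-mixed-product}--\Cref{eq:kron-det}, and read off the two conditional subproblems. The secant penalty expands as
\[
\norm[\mat{H}^{-1}]{s_t-\mat{H}y_t}^2 = s_t^\top\mat{H}^{-1}s_t - 2s_t^\top y_t + y_t^\top\mat{H}y_t .
\]
Using \Cref{eq:kron-vec,eq:vec-inner-product}, this equals $\tr(\mat{S}_t^\top\mat{R}^{-1}\mat{S}_t\mat{L}^{-1}) - 2s_t^\top y_t + \tr(\mat{Y}_t^\top\mat{R}\mat{Y}_t\mat{L})$. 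The divergence term expands as
\[
\D(\mat{H}_t;\mat{H}) = \tr(\mat{L}_t\mat{L}^{-1})\tr(\mat{R}_t\mat{R}^{-1}) + n\log\det\mat{L} + m\log\det\mat{R} + \text{const},
\]
using \Cref{eq:kron-trace,eq:kron-det}. Multiplying by $\lambda$, fixing $\mat{R}=\mat{R}_t$ (so $\tr(\mat{R}_t\mat{R}^{-1})=n$), and dividing by $n$ gives exactly \Cref{eq:kron-l-subproblem} with $\mat{U}_L,\mat{V}_L$ as in \Cref{eq:ULVL-app}. Fixing $\mat{L}=\mat{L}_{t+1}$ and dividing by $m$ gives \Cref{eq:kron-r-subproblem} with $\mat{U}_R,\mat{V}_R$ as in \Cref{eq:URVR-app}. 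So each conditional problem is, up to a positive factor and an additive constant, the soft QN objective restricted to one factor.

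Next we show each subproblem $\phi(\mat{Q})=\tr(\mat{U}\mat{Q})+\tr(\mat{V}\mat{Q}^{-1})+\log\det\mat{Q}$, with $\mat{U}\succeq0$ and $\mat{V}\succ0$, has a unique minimizer over $\Splusplus^d$. First, $\mat{U}_L,\mat{U}_R\succeq0$ by the Gram form, and $\mat{V}_L,\mat{V}_R\succ0$ because $\mat{L}_t\succ0$, $\mat{R}_t\succ0$, and $\tr(\mat{L}_t\mat{L}_{t+1}^{-1})>0$.

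$\phi$ is not convex in $\mat{Q}$ because of the $\log\det$ term, so we change variables to $\mat{B}=\mat{Q}^{-1}$, as in the proof of \Cref{lem:diag-update}. This gives
\[
\psi(\mat{B}) = \tr(\mat{V}\mat{B}) + \tr(\mat{U}\mat{B}^{-1}) - \log\det\mat{B}.
\]
Each term is convex: the first is linear, $\mat{B}\mapsto\tr(\mat{U}\mat{B}^{-1})$ is convex for $\mat{U}\succeq0$, and $-\log\det$ is strictly convex. Hence $\psi$ is strictly convex.

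For coercivity, the bound $\tr(\mat{V}\mat{B})\ge\lambda_{\min}(\mat{V})\tr\mat{B}$ dominates $-\log\det\mat{B}$ as eigenvalues grow. As any eigenvalue of $\mat{B}$ tends to $0$, $-\log\det\mat{B}\to+\infty$, while the remaining terms are bounded below (by $0$, resp.\ by $\lambda_{\min}(\mat{V})\tr\mat{B}\ge0$). So $\psi$ has a unique minimizer in the open cone.

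Setting the gradient of $\phi$ to zero gives $\mat{U}-\mat{Q}^{-1}\mat{V}\mat{Q}^{-1}+\mat{Q}^{-1}=0$, i.e.\ the quadratic matrix equation $\mat{Q}\mat{U}\mat{Q}+\mat{Q}=\mat{V}$. We then verify that the formula \Cref{eq:qme-symmetric-main} is positive definite and satisfies this equation. Write $\mat{Q}=\mat{V}^{1/2}\mat{Z}\mat{V}^{1/2}$ with $\mat{Z}=2(\mat{I}+(\mat{I}+4\mat{W})^{1/2})^{-1}$ and $\mat{W}=\mat{V}^{1/2}\mat{U}\mat{V}^{1/2}$. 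The equation reduces to $\mat{Z}\mat{W}\mat{Z}+\mat{Z}=\mat{I}$. Since $\mat{Z}$ commutes with $\mat{W}$, this is the scalar identity $w z^2+z=1$ applied eigenvalue-wise, with root $z=2/(1+\sqrt{1+4w})>0$. By uniqueness of the stationary point this $\mat{Q}$ is the minimizer, which gives the closed form.

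The cost count is routine. Forming $\mat{U}_L$ and $\mat{V}_L$ costs $\O(n^2m+nm^2)$, plus an $\O(n^3)$ inverse of $\mat{R}_t$. Forming $\mat{U}_R$ and $\mat{V}_R$ costs the same plus an $\O(m^3)$ inverse of $\mat{L}_{t+1}$. The square roots and inverse in \Cref{eq:qme-symmetric-main}, via eigendecompositions, cost $\O(m^3)$ and $\O(n^3)$. Monotonic decrease follows because each step exactly minimizes the (rescaled, shifted) full objective over one factor with the other fixed. Hence $F(\mat{L}_{t+1},\mat{R}_{t+1})\le F(\mat{L}_{t+1},\mat{R}_t)\le F(\mat{L}_t,\mat{R}_t)$.

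The main obstacle is the bookkeeping in the $\mat{R}$-step. There the cross term $\tr(\mat{L}_t\mat{L}_{t+1}^{-1})\tr(\mat{R}_t\mat{R}^{-1})$ must be matched with the normalization $m^{-1}$ in $\mat{V}_R$, and the constant $\lambda$ must be carried correctly; I would check this by direct expansion.
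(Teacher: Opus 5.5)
Your proposal is correct and follows essentially the same route as the paper: you substitute $\mat{H}=\mat{L}\otimes\mat{R}$, read off the two conditional subproblems, get uniqueness from strict convexity in the inverse factor, and get monotone decrease from exact block minimization with the reference $\mat{H}_t$ held fixed. The only difference is that you verify the closed-form QME solution directly, where the paper cites Cai et al.; one small fix: your coercivity sentence is loose, since a single eigenvalue tending to $0$ need not send $-\log\det\mat{B}$ to $+\infty$ if another eigenvalue blows up, but the bound $\psi(\mat{B})\ge\sum_i\bigl(\lambda_{\min}(\mat{V})\mu_i-\log\mu_i\bigr)$ (with $\mu_i$ the eigenvalues of $\mat{B}$) repairs it, and the step is unnecessary anyway once you exhibit an explicit stationary point of a strictly convex function.
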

\begin{proof}
Substituting 
$\mat{H} = \mat{L} \otimes \mat{R}$
into the 
objective in \Cref{eq:softserve-obj}
and applying the identities in \Crefrange{eq:kron-mixed-product}{eq:kron-det}, up to a constant independent of $\mat{L}$ and $\mat{R}$,
\begin{align}
 \Phi_t(\mat{L},\mat{R})
 =\;&\tr(\mat{L}\mat{Y}_t^\top\mat{R}\mat{Y}_t)
     +\tr(\mat{L}^{-1}\mat{S}_t^\top\mat{R}^{-1}\mat{S}_t)\notag\\
 &+\tfrac{1}{\lambda}\tr(\mat{L}_t\mat{L}^{-1})
                         \tr(\mat{R}_t\mat{R}^{-1})
  +\tfrac{1}{\lambda}\left[n\log\det\mat{L}+m\log\det\mat{R}\right].
 \label{eq:kron-objective}
\end{align}
This expression makes the
dependence on each factor explicit.
We minimize it first over $\mat{L}$
with $\mat{R}=\mat{R}_t$,
then over $\mat{R}$ with 
$\mat{L}=\mat{L}_{t+1}$ fixed.

Holding $\mat{R}=\mat{R}_t$ fixed and collecting  terms in~\Cref{eq:kron-objective} involving $\mat{L}$,
we obtain the conditional update
\begin{align}
\mat{L}_{t+1} =
    \argmin_{\mat{L} \in \Splusplus^m} \quad\tr(\mat{U}_L \mat{L}) + \tr(\mat{V}_L \mat{L}^{-1}) + \log\det \mat{L}.\label{eq:factorLupdate2}
\end{align}
Differentiating the objective in \Cref{eq:factorLupdate2} with respect to $\mat{L}$ and setting the derivative to zero yields the  quadratic matrix equation
\begin{align}
    \mat{L} \, \mat{U}_L \, \mat{L} + \mat{L} = \mat{V}_L.
    \label{eq:factorA}
\end{align}
Here, because $\mat{U}_L \succeq 0$ and $\mat{V}_L \succ 0$, this equation has a positive-definite solution~$\mat{L}_{t+1}$ (\Cref{lemma:qme}).
To establish uniqueness,
we appeal to the observation that the objective 
is strictly convex in
$\mat{L}^{-1}$.

We then update the factor $\mat{R}$ in an analogous manner, holding $\mat{L}_{t+1}$ fixed:
\begin{align}
\mat{R}_{t+1} =
    \argmin_{\mat{R} \in \Splusplus^n} \quad\tr(\mat{U}_R \mat{R}) + \tr(\mat{V}_R \mat{R}^{-1}) + \log\det\mat{R},
    \label{eq:factorRupdate2}
\end{align}
Solving
    \Cref{eq:factorRupdate2}
    for $\mat{R}_{t+1}$
    reduces to solving another
    quadratic matrix equation
\begin{align}
    \mat{R} \, \mat{U}_R \, \mat{R} + \mat{R} = \mat{V}_R,\label{eq:kron-r-qme}
\end{align} 
which again has a unique, positive-definite solution.

Each block update minimizes the objective over one factor,
holding the other factor and the reference matrix \(\mat{H}_t\)
fixed. Since the current factor is feasible, each update
cannot increase the objective.
Forming the coefficient products costs \(\O(n^2m+nm^2)\); the factor inverses and two quadratric matrix equation solves cost \(\O(n^3+m^3)\).
\end{proof}

\subsection{Quadratic Matrix Equations}
\label{app:qme}

In the previous~\Cref{asec:kronecker} we showed that \SoftServe{}-Kron
has an alternating update for the factors
$\mat{L}$
and $\mat{R}$, where first we perform
\begin{align}
    \boxed{
    \text{Update~$\mat{L}_{t+1}$~by~solving~}
    \mat{L} \, \mat{U}_L \, \mat{L} + \mat{L} = \mat{V}_L,
    }
   \label{eq:factorLupdate}
\end{align}
and then we perform the update
\begin{align}
    \boxed{
    \text{Update~$\mat{R}_{t+1}$~by~solving~}
    \mat{R} \, \mat{U}_R \, \mat{R} + \mat{R} = \mat{V}_R.
    }
    \label{eq:factorRupdate}
\end{align}

Given known matrices 
 $\mat{U} \succeq 0$ and $\mat{V} \succ 0$,
 each update above
 involves solving a quadratic matrix equation of the form
\begin{align}
    \boxed{
\mat{Q}\mat{U}\mat{Q}+\mat{Q}=\mat{V}.
    }
    \label{eq:qme-app}
\end{align}

\citet[Lemma~B.2]{cai2024}
    show that \Cref{eq:qme-symmetric-app} below is the positive definite solution.

\begin{lemma}[Quadratic matrix equation]
\label{lemma:qme}
    The positive definite solution to  \Cref{eq:qme-app} is given by
\begin{equation}
\mat{Q}
 =2\mat{V}^{1/2}
\left[\mat{I} + \left(\mat{I} + 4\mat{V}^{1/2}\mat{U}\mat{V}^{1/2}\right)^{1/2}\right]^{-1}
 \mat{V}^{1/2}.
 \label{eq:qme-symmetric-app}
\end{equation}
\end{lemma}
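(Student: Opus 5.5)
The plan is to verify directly that the matrix in \Cref{eq:qme-symmetric-app} is positive definite and satisfies \Cref{eq:qme-app}, and then to argue that no other positive definite solution exists.

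\textbf{Step 1: reduce to a scalar-like equation.} Write $\mat{V}^{1/2}$ for the principal square root, which is symmetric positive definite because $\mat{V}\succ 0$. Then change variables by
\[
\mat{Q}=\mat{V}^{1/2}\mat{Z}\mat{V}^{1/2}, \qquad \mat{W}:=\mat{V}^{1/2}\mat{U}\mat{V}^{1/2}\succeq 0 .
\]
Substituting into $\mat{Q}\mat{U}\mat{Q}+\mat{Q}=\mat{V}$ gives $\mat{V}^{1/2}(\mat{Z}\mat{W}\mat{Z}+\mat{Z})\mat{V}^{1/2}=\mat{V}^{1/2}\mat{V}^{1/2}$. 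Since $\mat{V}^{1/2}$ is invertible, this is equivalent to
\[
\mat{Z}\mat{W}\mat{Z}+\mat{Z}=\mat{I}.
\]
Congruence by $\mat{V}^{1/2}$ preserves positive definiteness, so $\mat{Q}\succ 0$ if and only if $\mat{Z}\succ 0$.

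\textbf{Step 2: solve in the eigenbasis of $\mat{W}$.} Diagonalize $\mat{W}=\mat{O}\Lambda\mat{O}^\top$ with $\Lambda=\diag(w_i)$ and $w_i\ge 0$. Try $\mat{Z}=\mat{O}\diag(z_i)\mat{O}^\top$, so that $\mat{Z}$ commutes with $\mat{W}$. The equation then decouples into the scalar equations
\[
w_i z_i^2+z_i-1=0 .
\]
Each has the unique positive root
\[
z_i=\frac{2}{1+\sqrt{1+4w_i}},
\]
which also covers $w_i=0$; this is the same root as in the proof of \Cref{lem:diag-update}. Hence
\[
\mat{Z}=2\bigl[\mat{I}+(\mat{I}+4\mat{W})^{1/2}\bigr]^{-1}\succ 0 .
\]
Undoing the change of variables recovers exactly \Cref{eq:qme-symmetric-app}. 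The inverse exists because $\mat{I}+(\mat{I}+4\mat{W})^{1/2}\succeq 2\mat{I}$.

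\textbf{Step 3: uniqueness among positive definite solutions.} This is the main obstacle, because Step 2 only produced a solution that commutes with $\mat{W}$. I would not argue algebraically but variationally, as in the proof of \Cref{lem:kron-sweep}. Any $\mat{Q}\succ 0$ solving \Cref{eq:qme-app} is a stationary point of
\[
\phi(\mat{Q})=\tr(\mat{U}\mat{Q})+\tr(\mat{V}\mat{Q}^{-1})+\log\det\mat{Q}.
\]
Set $\mat{B}=\mat{Q}^{-1}$. Then
\[
\phi=\tr(\mat{U}\mat{B}^{-1})+\tr(\mat{V}\mat{B})-\log\det\mat{B}.
\]
Each term is convex on $\Splusplus$: $\mat{B}\mapsto\tr(\mat{U}\mat{B}^{-1})$ is convex for $\mat{U}\succeq 0$, the middle term is linear, and $-\log\det\mat{B}$ is strictly convex. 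Hence $\phi$ is strictly convex in $\mat{B}$. Since $\mat{B}\mapsto\mat{B}^{-1}$ is a bijection of $\Splusplus$, a stationary point in $\mat{Q}$ is a stationary point in $\mat{B}$, and a strictly convex function has at most one stationary point. So the positive definite solution is unique.

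Alternatively, uniqueness can be shown directly. For any $\mat{Z}\succ 0$ with $\mat{Z}\mat{W}\mat{Z}+\mat{Z}=\mat{I}$, set $\mat{T}=\mat{Z}^{-1}$. Then $\mat{W}=\mat{T}^2-\mat{T}$, so $\mat{T}=\tfrac12\bigl(\mat{I}+(\mat{I}+4\mat{W})^{1/2}\bigr)$ by uniqueness of positive definite square roots, since $(2\mat{T}-\mat{I})^2=\mat{I}+4\mat{W}$. This needs $2\mat{T}-\mat{I}\succ 0$, which I would check from $\mat{T}^2-\mat{T}=\mat{W}\succeq 0$. Because $\mat{T}$ is symmetric, its eigenvalues $\tau$ satisfy $\tau^2\ge\tau$ and $\tau>0$, hence $\tau\ge 1$ and so $2\mat{T}-\mat{I}\succeq \mat{I}\succ 0$. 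This route resolves the obstacle cleanly, and I would present it as the main argument.
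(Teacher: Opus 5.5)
Your proof is correct, and it is more self-contained than what the paper offers. The paper gives no argument of its own for this lemma; it imports the formula from \citet[Lemma~B.2]{cai2024}. The only uniqueness reasoning it supplies appears later, in the proof of \Cref{lem:kron-sweep}, where it appeals to strict convexity of the subproblem objective in $\mat{L}^{-1}$. That is essentially your first (variational) route in Step~3.

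Your own argument goes further. You reduce by congruence to $\mat{Z}\mat{W}\mat{Z}+\mat{Z}=\mat{I}$ with $\mat{W}=\mat{V}^{1/2}\mat{U}\mat{V}^{1/2}$, and the spectral solution gives existence. The direct uniqueness argument then sets $\mat{T}=\mat{Z}^{-1}$ and obtains $(2\mat{T}-\mat{I})^2=\mat{I}+4\mat{W}$. Using $\tau^2\ge\tau$ and $\tau>0$, you get $2\mat{T}-\mat{I}\succeq\mat{I}$ before invoking uniqueness of positive definite square roots. Together these form a complete, purely algebraic proof that every symmetric positive definite solution equals the stated formula. It relies neither on an optimization interpretation nor on convexity of $\mat{B}\mapsto\tr(\mat{U}\mat{B}^{-1})$.

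The variational route gives something the algebra alone does not. Combined with your existence step, it shows the formula is the unique global minimizer of $\tr(\mat{U}\mat{Q})+\tr(\mat{V}\mat{Q}^{-1})+\log\det\mat{Q}$ over $\mathbb S_{++}$. This holds because a stationary point of a function that is strictly convex in $\mat{Q}^{-1}$ is its unique minimizer. That is exactly what \Cref{lem:kron-sweep} needs to identify $\mat{L}_{t+1}$ and $\mat{R}_{t+1}$ with the QME solutions. For the lemma as stated, presenting the algebraic argument as the main proof, as you propose, is the cleaner choice.
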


When updating either Kronecker factor, \(\mat{L}\) or \(\mat{R}\),
the matrix \(\mat{U}\) in \Cref{eq:qme-app} admits a Gram factorization
\(\mat{U}=\mat{F}\mat{F}^\top\).
\citet[Lemma~B.3]{cai2024} (see also \citet{modi2024batch})
provide an equivalent expression for the solution to the quadratic matrix equation
in \Cref{eq:qme-app}
in terms of this Gram matrix,
which is given by
\begin{align}
    \boxed{
    \mat{Q} = \mat{V} - \mat{V}^\top \mat{F}
    \left[
    \tfrac{1}{2} \mat{I} + (\mat{F}^\top \mat{V} \mat{F} + \tfrac{1}{4} \mat{I})^\frac{1}{2}
    \right]^{-2} \mat{F}^\top \mat{V}.
    }
    \label{eq:gram:app}
\end{align}
For a matrix $\mat{F} \in \reals^{d \times p}$,
this update takes $\O(d^2 p + p^3)$ computation.

Our implementation of \SoftServe{}-Kron,
which relies on the coupled Newton-Schulz,
will rely on the latter Gram
form of the solution in
 \Cref{eq:gram:app}.

\section{\SoftServe{}: numerical computation of Kronecker updates}

The Kronecker updates in
\Cref{sec:kron} require solving two
quadratic matrix equations,
one for each factor.
In this section, we describe 
how to evaluate these updates
using matrix multiplications and
Newton-Schulz iterations,
as outlined in \Cref{sec:gram}.
We begin by explaining how to form the coefficients $\mat{U}, \mat{V}$ of the quadratic matrix equations,
then we describe the Newton-Schulz procedures,
and finally we explain how to apply the Newton-Schulz procedures 
to obtain the full Gram update, providing explicit pseudocode
for the Kronecker factor updates $\mat{L}$ and $\mat{R}$.

\label{app:numerics}

\subsection{Evaluating the updates in Gram form}
\label{app:gram:details}
In this section, we show that the coefficients
$\mat{U} = \mat{F}\mat{F}^\top$ and $\mat{V}$ 
can be written in terms of matrix multiplications and square root and inverse square root operations.
Consider the Gram solution to the quadratic matrix equation in
\Cref{eq:gram:app}.
This expression requires a square root of 
$\mat{C} =  \mat{F}^\top \mat{V} \mat{F} + \tfrac{1}{4}\mat{I}$
and an inverse of 
$\tfrac{1}{2}\mat{I} + \mat{C}^\frac{1}{2}$.
Overall, this solution requires fewer matrix function evaluations (i.e., square root and inverse)
than
the solution in
\Cref{eq:qme-symmetric-app}.

We begin by motivating the Gram update further by examining the factorization \(\mat{U}=\mat{F}\mat{F}^\top\).
For our Kronecker factor updates,
we recall the definitions of $\mat{U}_L$
and $\mat{U}_R$ in \Cref{eq:ULVL-app,eq:URVR-app},
\begin{align}
    \mat{U}_L &= \tfrac{\lambda}{n} \mat{Y}_t^\top \mat{R}_t \mat{Y}_t \quad 
    \\
    \mat{U}_R &= \tfrac{\lambda}{m} \mat{Y}_t \mat{L}_{t+1} \mat{Y}_t^\top. 
\end{align}
Thus,
the required factorizations arise from
\begin{align}
    \mat{U}_L &= \mat{F}_L \mat{F}_L^\top,
    \qquad
    \mat{F}_L = \sqrt{\tfrac{\lambda}{n}} \mat{Y}_t^\top \mat{R}_t^\frac{1}{2} \\
    \mat{U}_R &= \mat{F}_R \mat{F}_R^\top,
    \qquad
    \mat{F}_R = \sqrt{\tfrac{\lambda}{m}} \mat{Y}_t \mat{L}_{t+1}^\frac{1}{2}. 
\end{align}

The remaining coefficients $\mat{V}_L$
and $\mat{V}_R$ can be
formed with the \emph{corresponding} inverse square roots to the square roots used above.
For example,
for $\mat{V}_L = \mat{L}_t + \tfrac{\lambda}{n} \mat{S}_t^\top \mat{R}_t^{-1} \mat{S}_t$, we can form the second term 
using
\begin{align}
    \mat{S}_t^\top \mat{R}_t^{-1}\mat{S}_t
    =
    (\mat{R}_t^{-\frac{1}{2}}\mat{S}_t)^\top
    (\mat{R}_t^{-\frac{1}{2}}\mat{S}_t).
\end{align}
This construction allows us to reuse both outputs of each coupled Newton-Schulz call:
the square root forms $\mat{F}_L$ (or $\mat{F}_R)$,
and the corresponding inverse square root forms
$\mat{V}_L$ (or $\mat{V}_R$),
avoiding a separate Newton-Schulz call to approximate the inverse $\mat{R}_t^{-1}$.

In
\Cref{alg:kron-gram}, we summarize how to form 
all of the coefficients in this manner.

\begin{algorithm}[t]
    \caption{Newton--Schulz routines}
    \label{alg:ns}
    \begin{algorithmic}[1]
    \Function{NewtonSchulzRootPair}{$\mat{M}, J$}
    \State $a \gets \|\mat{M}\|_F$
    \State $\mat{Z} \gets \mat{M}/a$
    \State $\mat{P} \gets \mat{I}$
    \For{$j=0,\ldots,J-1$}
        \State $\mat{T} \gets \tfrac{1}{2} (3 \mat{I} - \mat{P}\mat{Z})$
        \State $\mat{Z} \gets \operatorname{sym}[\mat{Z} \mat{T}]$
        \State $\mat{P} \gets \operatorname{sym}[\mat{T} \mat{P}]$
    \EndFor
    \State
    \Return $(\sqrt{a} \, \mat{Z}; \mat{P} /\sqrt{a})$
    \Comment{Approximations to $\mat{M}^{\frac{1}{2}}, \mat{M}^{-\frac{1}{2}}$}
    \EndFunction
    \Statex
    \Function{NewtonSchulzInverse}{$\mat{M}, J$}
        \State $\mat{E} \gets \mat{I} / \|\mat{M}\|_F$
        \For{$j = 0,\ldots,J-1$}
            \State $\mat{E} \gets \operatorname{sym}\!\left[\mat{E}(2 \mat{I}-\mat{M} \mat{E})\right]$
        \EndFor
        \State \Return $\mat{E}$
        \Comment{Approximation to $\mat{M}^{-1}$}
    \EndFunction
    \end{algorithmic}
\end{algorithm}

\begin{algorithm}[t]
    \caption{Kronecker updates using the Gram solution}
    \label{alg:kron-gram}
    \begin{algorithmic}[1]
    \Function{SolveGram}{$\mat{F}, \mat{V}, J_{\mathrm{root}}, J_{\mathrm{inv}}$}
    \Comment{Approximately solve
    $\mat{Q}\mat{F}\mat{F}^\top \mat{Q} + \mat{Q} = \mat{V}$}
    \State $\mat{C} \gets \mat{F}^\top \mat{V} \mat{F} +  \tfrac{1}{4}\mat{I}$
     \State $({\mat{B}}, \texttt{\_}) \gets$ \Call{NewtonSchulzRootPair}{$\mat{C}, J_{\text{root}}$}
     \State $\mat{W} \gets$ \Call{NewtonSchulzInverse}{$\tfrac{1}{2} \mat{I} + \mat{B}, J_{\mathrm{inv}}$}
     \State $\mat{E} \gets \mat{V}\mat{F}\mat{W}$
     \State \Return $\operatorname{sym}[\mat{V} - \mat{E}\mat{E}^\top]$
    \EndFunction

    \Statex
    
    \Function{KroneckerUpdate}{$\mat{L}, \mat{R}, \mat{S}, \mat{Y}, \lambda, J_{\mathrm{root}}, J_{\mathrm{inv}}$}
    \Statex
    \Comment{\red{Update $\mat{L}$, holding $\mat{R}$ fixed}}
    \State $(\mat{C}_R, \mat{D}_R) \gets$
    \Call{NewtonSchulzRootPair}{$\mat{R}, J_{\mathrm{root}}$}
    
    \State $\mat{F}_L \gets \sqrt{\tfrac{\lambda}{n}} \, \mat{Y}^\top \mat{C}_R$
    \Comment{Form factored coefficient $\mat{U}_L = \mat{F}_L\mat{F}_L^\top$}
    \State $\mat{Z}_R \gets \mat{D}_R \mat{S}$
    \State $\mat{V}_L \gets \mat{L} + \tfrac{\lambda}{n} \, \mat{Z}_R^\top \, \mat{Z}_R$
    \Comment{Form  coefficient $\mat{V}_L$} 
    \State $\mat{L}^+ \gets$ \Call{SolveGram}{$\mat{F}_L,\mat{V}_L, J_{\mathrm{root}}, J_{\mathrm{inv}}$}
    \Statex
    \Comment{\red{Update $\mat{R}$, holding $\mat{L}^+$ fixed}}
    \State $(\mat{C}_L, \mat{D}_L) \gets$
    \Call{NewtonSchulzRootPair}{$\mat{L}^+, J_{\mathrm{root}}$}
    \State $\mat{F}_R \gets \sqrt{\tfrac{\lambda}{m}} \, \mat{Y} \mat{C}_L$
    \Comment{Form factored coefficient $\mat{U}_R = \mat{F}_R\mat{F}_R^\top$}
    \State $\mat{Z}_L \gets \mat{S} \mat{D}_L $
    \State $\alpha_L \gets 
    \tfrac{1}{m} \, \tr(\mat{D}_L \mat{L} \mat{D}_L)$
   \Comment{Depends on  original factor $\mat{L}$}
    \State $\mat{V}_R \gets \alpha_L \mat{R}+ \tfrac{\lambda}{m} \, \mat{Z}_L \mat{Z}_L^\top$
    \Comment{Form  coefficient $\mat{V}_R$} 
    \State $\mat{R}^+ \gets$ \Call{SolveGram}{$\mat{F}_R,\mat{V}_R, J_{\mathrm{root}}, J_{\mathrm{inv}}$}
    \EndFunction
    \Return $(\mat{L}^+, \mat{R}^+)$
    \end{algorithmic}
\end{algorithm}

\subsection{Newton-Schulz for the square root and inverse}
\label{asec:newton-schulz-couple} 
To compute the Gram solution of the quadratic matrix equation, we need an approximation for the matrix square root
and the matrix inverse.
We approximate the required matrix functions using Newton-Schulz iterations, which construct polynomial approximations through repeated matrix multiplications and additions.
We summarize these routines in 
\Cref{alg:ns}
as \textsc{NewtonSchulzRootPair}
and
\textsc{NewtonSchulzInverse}, respectively.

For square roots,
we use the coupled Newton-Schulz iteration,
which simultaneously approximates a 
matrix square root and its inverse
using matrix multiplications \citep{higham_stable_1997,higham_functions_2008}.
For an input matrix
$\mat{M} \in \Splusplus^d$,
let $\|\mat{M}\|_F$ denote its Frobenius norm.
First, we rescale the matrix~$\|\mat{M}\|_F$, placing the eigenvalues between $0$ and $1$:
\begin{align}
a &= \|\mat{M}\|_F,  \\
\mat{Z}_0 &= \mat{M}/a, \\
\mat{P}_0 &= \mat{I}.
\end{align}

In the Newton-Schulz procedure,
we then iterate $j=0,\ldots,J-1$ times through
\begin{align}
  \mat{T}_j &= \tfrac{1}{2}(3 \mat{I} - \mat{P}_j \mat{Z}_j) \\
    \mat{Z}_{j+1} &= \text{sym}[\mat{Z}_j \mat{T}_j] \\
    \mat{P}_{j+1} &= \text{sym}[\mat{T}_j \mat{P}_j],
    \end{align}
 where we define the matrix symmetrization operation
\begin{align}
\label{eq:sym}
\text{sym}[\mat{A}] = (\mat{A}+\mat{A}^\top)/2.
\end{align}

The iterates
$\mat{Z}_j$ and $\mat{P}_j$ approximate the
square root and inverse square root, respectively,
of the scaled input.
The correction~$\mat{T}_j = \tfrac{1}{2}(3 \mat{I} - \mat{P}_j \mat{Z}_j)$
is a first-degree polynomial in $\mat{P}_j \mat{Z}_j$.

After $J$~iterations,
rescaling by $\sqrt{a}$ gives the desired approximations with respect to the original input matrix are given by
\begin{align}
    \mat{M}^\frac{1}{2} &\approx \sqrt{a} \, \mat{Z}_{J} \\
    \mat{M}^{-\frac{1}{2}} &\approx \mat{P}_J / \sqrt{a}.
\end{align}
To approximate the inverse of the matrix,
we apply a separate Newton-Schulz iteration~\citep{schulz_iterative_1933,higham_functions_2008}.
For a positive definite matrix $\mat{M}\in \Splusplus^d$,
we initialize and then iterate $j=0,\ldots,J-1$ times 
\begin{align}
    \mat{E}_0 &= \mat{I} / \|\mat{M} \|_F 
    \\
    \mat{E}_{j+1} &= \operatorname{sym}[\mat{E}_j (2 \mat{I} -\mat{M} \mat{E}_j)].
\end{align}
This update builds a polynomial approximation to
$\mat{M}^{-1}$.
After $J$ iterations, we return the approximation~$\mat{E}_J \approx \mat{M}^{-1}$.

\subsection{Computing the Kronecker updates via Newton-Schulz}

We now combine the Newton-Schulz routines to evaluate the Gram update in \Cref{eq:gram:app}.
Given $\mat{F}$ and $\mat{V}$, we form
$\mat{C} =  \mat{F}^\top \mat{V} \mat{F} + \tfrac{1}{4}\mat{I}$
and apply 
 \textsc{NewtonSchulzRootPair} to obtain an approximation
 $\widehat{\mat{B}} \approx \mat{C}^\frac{1}{2}$.
We then apply
\textsc{NewtonSchulzInverse}
to obtain $\widehat{\mat{W}} \approx (\tfrac{1}{2}\mat{I} + \widehat{\mat{B}})^{-1}$.
The resulting Newton-Schulz approximation to the 
Gram solution~\eqref{eq:gram:app} of the quadratic matrix equation
is then given by
\begin{align}
    \widehat{\mat{Q}} =
    \mat{V} - 
    (\mat{V}\mat{F}{\widehat{\mat{W}}})(\mat{V}\mat{F}{\widehat{\mat{W}}})^\top.
\end{align}
The resulting update only requires these two
Newton-Schulz routines and matrix multiplications.
\Cref{alg:kron-gram} presents the pseudocode for this Gram update and 
its application within  the alternating Kronecker update for the factors $\mat{L}$ and $\mat{R}$.
In our implementation, we use 18 iterations for the square root routines, and 10 for the inverse routine.

In the application of these procedures
to the quadratic
matrix equation solution,
our implementation symmetrizes every output
and, for every solve,
records the normalized residual \[\rho_Q=\norm[\mathrm F]{\mat{Q}\mat{U}\mat{Q}+\mat{Q}-\mat{V}}/\norm[\mathrm F]{\mat{V}},\]
so we can verify how close we are to the exact solution to the quadratic matrix equation.

\section{Optimizer Details}
\label{app:optimizer}

\subsection{Training with \SoftServe{}}
\label{app:training:details}

We provide more of the details of how our \SoftServe{} methods are implemented. In particular, this section will explain the detailed pseudo code of \SoftServe{}-Kron and \SoftServe{}-Diag is in~\Cref{alg:softserve-kron} and~\Cref{alg:softserve-diag}, respectively.

\SoftServe{}-Kron is a method for updating matrix parameters, where we will use $\mX\in \R^{n\times  m}$ to denote the matrix parameters of a given layer. Whereas \SoftServe{}-Diag is applicable to parameters of any shape, and thus in~\Cref{alg:softserve-diag}, we represent the given parameters as a vector $x \in \R^{d}$. 
If this corresponds to a linear layer, then $d = nm$, where $n\times m$ are the dimensions of the linear layer.

 To make \SoftServe{} practical, we will make use of some now-standard practices in deep learning, including the use of momentum,   updating the approximate Hessian every so often to amortize costs, and taking care to properly scale the update directions. We address each of these considerations in turn.

To estimate the gradient for the quasi-Newton step, we will use momentum estimate which in matrix form is updated
according to lines~(\ref{ln:mom1}--\ref{ln:mom2}) in~\Cref{alg:softserve-kron}, and in vector form is updated according to lines~(\ref{ln:mom1-d}--\ref{ln:mom2-d}) in~\Cref{alg:softserve-diag}.
We also incorporate an additional Nesterov extrapolation or   
the debiasing trick~\cite{kingma_adam_2015} to form the final momentum estimate.

Restricting the Hessian approximation to a diagonal or Kronecker factor greatly reduces the costs as compared to the dense Soft QN method. For instance, for a $n\times m$ linear layer, the state, application, and refresh costs of \SoftServe{} are given in~\Cref{tab:costs}.
\begin{table}  \centering
\begin{tabular}{lccc}
\toprule
Metric & State & Apply & Refresh \\
\midrule
Dense & \(\mathcal O(n^2m^2)\) & \(\mathcal O(n^2m^2)\) & \(\mathcal O(n^2m^2)\) \\
Diag & \(\mathcal O(nm)\) & \(\mathcal O(nm)\) & \(\mathcal O(nm)\) \\
Kron, \(n\times m\) & \(\mathcal O(n^2+m^2)\) & \(\mathcal O(n^2m+nm^2)\) & \(\mathcal O(n^3+m^3+n^2m+nm^2)\) \\
\bottomrule
\end{tabular}
\caption{\SoftServe{} method costs for updating the parameters of a $n \times m$ linear layer.}
\label{tab:costs}
\end{table}
In particular, the cost for applying and refreshing the Kronecker-factored update in \SoftServe{}-Kron is still cubic in the dimensions of the layer. To amortize this cost, we will update our estimate of the inverse Hessian only once every $K \in \mathbb{N}$ iterations. That is, once every $K$ iterations we take a snapshot of the current iterate, gradient, and sample in line~\ref{ln:snap1} in~\Cref{alg:softserve-kron}. We then use this snap shot to form our difference of iterates $\mat{S}$ and gradients $\mat{Y}$ on lines~\ref{ln:St} and~\ref{ln:Yt}. The analogous snapshot for and its application for \SoftServe{}-Diag are on lines~\ref{ln:snap1-d},~\ref{ln:st} and~\ref{ln:yt} in~\Cref{alg:softserve-diag}.

The differences of iterates and gradients are then used
to update the factorized and diagonal Hessian estimates on
line~\ref{ln:Hfactor-update} of \Cref{alg:softserve-kron}
and line~\ref{ln:Hdiag-update} of \Cref{alg:softserve-diag},
respectively.
To be more efficient (or \emph{pythonic}) we have written
the update of the diagonal estimate $h_{t+1}$ on
line~\ref{ln:Hdiag-update} in \Cref{alg:softserve-diag}
as a vector operation as opposed to an element-wise operation
as introduced in \Cref{eq:diag-update1}.

\begin{algorithm}[t]
\caption{\SoftServe{}-Kron: detailed implementation}
\label{alg:softserve-kron}
\begin{algorithmic}[1]
\Require Step size $\gamma_t$; Number of iterations $T \in \mathbb{N},$ Refresh rate $K \in [T]$,
Penalty $\lambda$; momentum parameters $\beta\in (0,1)$, Nesterov $\in \{True, False \}$
\State {\bf Initialize}  $\mat{X}_0 \in \R^{n\times m}$, $\mat{M}_{-1} =0, \,\mat{L}_0 =\mat{I}_m$ and $\mat{R}_0 = \mat{I}_n.$
\For{\(t=0,\ldots,T-1\)}
  \State Draw \(\xi_t\) and evaluate \(\mat{G}_t\gets\nabla f_{\xi_t}(\mat{X}_t)\)
  \If{\(t\bmod K=0\)} 
    \State \(\bar \xi\gets\xi_t, \; \bar{\mat{X}} \gets \mat{X}_t, \;  \bar{\mat{G}} \gets \mat{G}_t \) \label{ln:snap1} \Comment{Take snapshot}
  \EndIf
  \State  \( \mat{M}_t=\beta \mat{M}_{t-1}+(1-\beta)\mat{G}_t \) \Comment{Update Momentum buffer}  \label{ln:mom1}
  \State \(
 \hat{\mat{M}}_t=\begin{cases}
 (1-\beta)\mat{G}_t+\beta \mat{M}_t & \mbox{if Nesterov = \texttt{True}}\\
 \mat{M}_t/(1-\beta^{t+1}) & \mbox{if Nesterov = \texttt{False}}
 \end{cases}\) \label{ln:mom2}  \Comment{Use bias correction or Nesterov}  
\State \(\mat{X}_{t+1} = \mat{X}_t -\gamma_t\frac{\mat{R}_t\hat{\mat{M}}_t\mat{L}_t}{\sqrt{\tr\left( \hat{\mat{M}}_t^\top\mat{R}_t\hat{\mat{M}}_t\mat{L}_t\right) }}\)
\Comment{Update parameters} \label{ln:update-kron}
  \If{\((t+1)\bmod K=0\) and \(t+1<T\)} \Comment{Update Hessian estimate}
    \State \(\mat{S} \gets \mat{X}_{t+1}-\bar{\mat{X}}\) \label{ln:St}
    \State \(\mat{Y} \gets \nabla f_{\bar \xi}(\mat{X}_{t+1})-\bar{\mat{G}}\) \label{ln:Yt}
\State Update Kronecker factors by Gram Newton-Schulz (\Cref{alg:kron-gram}) to solve
\begin{align*}
    &\text{Update~$\mat{L}_{t+1}$~by~solving~}
    \mat{L} \, \mat{U}_L \, \mat{L} + \mat{L} = \mat{V}_L
    \\
    &\text{Update~$\mat{R}_{t+1}$~by~solving~}
    \mat{R} \, \mat{U}_R \, \mat{R} + \mat{R} = \mat{V}_R,
\end{align*}
$\qquad \quad $where $\mat{U}_L, \mat{V}_L$ are given in~\Cref{eq:ULVL}, and  $\mat{U}_R, \mat{V}_R$ are given in~\Cref{eq:URVR}. \label{ln:Hfactor-update}
    \State Rebalance the Kronecker factors 
    \begin{equation}
(\mat{L}_{t+1},\mat{R}_{t+1})=(\mat{L}_{t+1}/c,\,c\mat{R}_{t+1}) \qquad \mbox{where }\; c=\sqrt{\frac{\tr(\mat{L}_{t+1})/m}{\tr(\mat{R}_{t+1})/n}}.
\label{eq:gauge}
\end{equation}
\EndIf
\EndFor
\State \Return $\mat{X}_{T}$
\end{algorithmic}
\end{algorithm}

\begin{algorithm}[t]
\caption{\SoftServe{}-Diag detailed implementation}
\label{alg:softserve-diag}
\begin{algorithmic}[1]
\Require Step size $\gamma_t$; Number of iterations $T \in \mathbb{N},$ Refresh rate $K \in [T]$,
Penalty $\lambda$; momentum parameters $\beta\in (0,1)$, Nesterov $\in \{True, False \}$
\State {\bf Initialize}  $x_0 \in \R^d$, \(m_{-1}\gets0\) and $h_0 =(1,\ldots, 1) \in \R^d.$
\For{\(t=0,\ldots,T-1\)}
  \State Draw \(\xi_t\) and evaluate \(g_t\gets\nabla f_{\xi_t}(x_t)\)
  \If{\(t\bmod K=0\)} 
      \State \(\bar \xi\gets\xi_t, \; \bar{x} \gets x_t, \;  \bar{g} \gets g_t \) \label{ln:snap1-d} \Comment{Take snapshot}
  \EndIf
  \State  \( m_t=\beta m_{t-1}+(1-\beta)g_t \)  \label{ln:mom1-d}
  \State \(
 \hat{m}_t=\begin{cases}
 (1-\beta)g_t+\beta m_t & \mbox{if Nesterov = \texttt{True}}\\
 m_t/(1-\beta^{t+1}) & \mbox{if Nesterov = \texttt{False}}
 \end{cases}\) \label{ln:mom2-d}
  \State   $ \displaystyle x_{t+1} = x_t - \gamma_t \frac{h_t \odot \hat{m}_t}{\sqrt{\hat{m}_t^\top (h_t \odot \hat{m}_t)}}$ \Comment{Update parameters }
  \label{ln:update-d}
  \If{\((t+1)\bmod K=0\) and \(t+1<T\)} \Comment{Update Hessian estimate}
    \State \(s\gets x_{t+1}-\bar x\) \label{ln:st}
    \State \(y\gets \nabla f_{\bar{\xi}}(x_{t+1})-\bar g\) \label{ln:yt}
    \State  
 $
h_{t+1}
=
2\left(h_t+\lambda\,s\odot s\right)
\oslash
\left[
\mathbf{1}
+
\sqrt{
\mathbf{1}
+
4\lambda
\left(y\odot y\right)
\odot
\left(h_t+\lambda\,s\odot s\right)
}
\right]
$
\label{ln:Hdiag-update} 
  \EndIf
\EndFor
\State \Return \(x_T\)
\end{algorithmic}
\end{algorithm}

Our final modification to make \SoftServe{} practical for deep learning is to normalize the update. We will explain this modification for a vector parameter $x\in\R^d$. The same modification for matrix parameters follows by vectorizing, and then re-stacking as a matrix. Traditionally quasi-Newton methods are applied according to
\begin{equation}
 x_{t+1} = x_t -\gamma_t \mat{H}_t \hat{m}_t .
 \label{eq:qn-step}
\end{equation}
This update~\Cref{eq:qn-step} corresponds to the \emph{regularized} descent method:
\begin{align*}
     x_{t+1}  =\underset{x \in \R^d}{\arg\min} \;\;\hat{m}_t^\top (x-x_t) + \tfrac{1}{2\gamma_t} \| x-x_t\|_{\mat{H}_t^{-1}}^2.
\end{align*}
But now there is a growing body of evidence that the \emph{constrained} descent method~\citep{jordan2024muon,scion,crawshaw2025muonvariants} given by
 \begin{align}\nonumber
   x_{t+1}  &=\underset{x \in \R^d}{\arg\min} \;\;\hat{m}_t^\top (x-x_t) \quad \mbox{subject to } \| x-x_t\|_{\mat{H}_t^{-1}} \leq \gamma_t\\
   &= x_t - \gamma_t  \mat{H}_t \hat{m}_t \big/ \sqrt{\hat{m}_t^\top \mat{H}_t \hat{m}_t}   \label{eq:gd-const}
 \end{align}
 generally works better for deep learning. Our initial experiments also confirmed that the constrained descent~\Cref{eq:gd-const} resulting in better methods
 and thus we used \Cref{eq:gd-const} throughout our experiments
to update the parameters; see
line~\ref{ln:update-kron} of \Cref{alg:softserve-kron}
and line~\ref{ln:update-d} of \Cref{alg:softserve-diag}.

 In these updates,  note that the product $\mat{H}_t \hat{m}_t$ in~\Cref{eq:gd-const} is  given by
\begin{equation}
 \mat{H}_t \hat{m}_t =\begin{cases}
  \vecop\!\big(\mat{R}_t \hat{\mat{M}}_t\mat{L}_t \big)& \mbox{in line~\ref{ln:update-kron} in~\Cref{alg:softserve-kron}  },\\
 h_t \odot \hat{m}_t & \mbox{in line~\ref{ln:update-d} in~\Cref{alg:softserve-diag}  },
 \end{cases}
 \label{eq:block-apply}
\end{equation}
where the Kronecker case follows from \Cref{eq:kron-vec}, and \(\odot\) is the elementwise (Hadamard) product.

\subsection{Routing and state}
The parameter partition is disjoint. A Diag block stores one positive value per coordinate; a Kron block for an \(n\times m\) matrix stores \(\mat{L}\in\mathbb S_{++}^m\) and \(\mat{R}\in\mathbb S_{++}^n\). Blocks without a matrix interpretation are routed to Diag, or to a first-order fallback in the neural experiments (Appendix~\ref{app:experiments}). Kron factors of equal shape are stacked for batched matrix multiplication. The initial states are \(h_0=\mathbf1\), the all-ones vector, and \((\mat{L}_0,\mat{R}_0)=(\mat{I}_m,\mat{I}_n)\). The metric is fixed within each interval of \(K\) parameter steps. Each refresh uses one raw interval pair; we do not average pairs or interpolate metric states.

\subsection{Replay and gradient accounting}
A replay key stores every random choice that affects \(f_{\zeta_t}\): the minibatch indices, augmentation, dropout masks, and any other sampled loss input. Replaying the key at both endpoints removes sampling mismatch from \(y_t\). The replayed gradient is used only for the refresh; it does not update momentum, the learning-rate schedule, or the parameters. For \(N\) parameter steps, Algorithm~\ref{alg:softserve-kron} evaluates \(N\) ordinary gradients and \(\lfloor(N-1)/K\rfloor\) replayed ones, and we count both. With decoupled weight decay, the displacement \(s_t\) is the actual parameter displacement, including the decay. A checkpoint taken inside an interval must retain the momentum, the metric state, the interval start, the cached raw gradient, and the replay key.

\section{Experimental Details and Additional Results}
\label{app:experiments}

\subsection{Experimental setting}
\label{app:protocol}

\paragraph{Hardware and arithmetic.}
We use NVIDIA RTX PRO 6000 Blackwell GPUs for training. Models use FP32, with BF16 autocasting for GPT. 

\paragraph{Selection and evaluation.}
We tune learning rates on seed 0 using logarithmic grids, extending a grid when its minimum lies at the boundary. Final runs use seeds 1--3. Within each task, methods share initialization seeds and data. Test errors are not used for selection. Curves show three-seed means and variability bands; settings with failed seeds are shown as individual traces.

RNN Adding, MNIST, and the small PINNs screen the first-moment coefficient $\beta_1$ for \SoftServe{}, \SoftServe{}-I, Adam, Muon, and SOAP. K-FAC and K-BFGS(L) also receive a momentum search. In these studies, Nesterov on/off is tested for \SoftServe{} and Muon; Adam and SOAP retain the momentum update rules of their implementations while their $\beta_1$ values are tuned. The task descriptions below specify which settings are tuned or fixed.

\paragraph{Optimizer settings.}
\SoftServe{} uses $K=10$, and one alternating factor sweep per refresh. Kron uses the Gram QME implementation in Algorithm~\ref{alg:kron-gram}, with 18 coupled root iterations and 10 inverse iterations; Diag uses the closed-form elementwise update. 

\paragraph{Gradient accounting and plots.}
Gradient budgets count training gradients, secant replays, line-search evaluations, and loss-balancing gradients. In stochastic training, replay adds approximately 10\% to \SoftServe{}-Kron's count at $K=10$, so it takes fewer parameter updates within the same budget. 

\FloatBarrier
\subsection{RNN Adding}
\label{app:rnn}

\paragraph{Task and data.}
Each length-$T$ input has $T=100$ pairs $z_\tau=(a_\tau,m_\tau)$, where $a_\tau\sim\mathrm{Uniform}[0,1]$. Exactly two markers $m_\tau$ equal one, one in each half of the sequence. The target is the sum of the marked numbers. Every update draws 128 new sequences; validation and test sets each contain 8,192 fixed sequences per seed.

\paragraph{Model and objective.}
The network uses $h_\tau=\tanh(W_hh_{\tau-1}+W_zz_\tau+b)$, hidden width 16, $h_0=0$, and a scalar affine readout, for 321 parameters. The target and batch loss are
\begin{equation}
y=\sum_{\tau=1}^{T}m_\tau a_\tau,\qquad
\mathcal L(\theta)=\frac1B\sum_{i=1}^{B}\bigl(\widehat y_\theta(z^{(i)})-y^{(i)}\bigr)^2.
\end{equation}
Here $B$ is the batch size and $\widehat y_\theta$ is the prediction.

\paragraph{Training and tuning.}
The budget is 40,000 gradient evaluations. An 8,000-gradient pilot screens hyperparameters, followed by full-budget learning-rate sweeps selected by validation MSE (\Cref{fig:lr-rnn}). The selected values are $\lambda=999$ for \SoftServe{}-Kron and $\lambda=1$ for \SoftServe{}-Diag.

\paragraph{Evaluation.}
We report minibatch training MSE and fixed-set validation and test MSE. \Cref{fig:rnn-axes} shows training and validation MSE against parameter updates and wall-clock time. K-BFGS(L)'s two failed seeds are shown individually.

\paragraph{Local conditioning.}
We compute the full $321\times321$ loss Hessian at initialization on a fixed batch of 128 sequences for each of seeds 0--3, using FP64 differentiation. All four Hessians are indefinite. Each has one to six singular values below $321\epsilon_{64}\sigma_{\max}$, where $\epsilon_{64}=2^{-52}$ and $\sigma_{\max}$ is the largest singular value. Thus, these sampled losses are numerically nearly singular at initialization; a finite condition number cannot be resolved reliably (\Cref{fig:rnn-hessian-spectrum}).

\begin{figure}[htbp]
  \centering
  \includegraphics[width=0.5\linewidth]{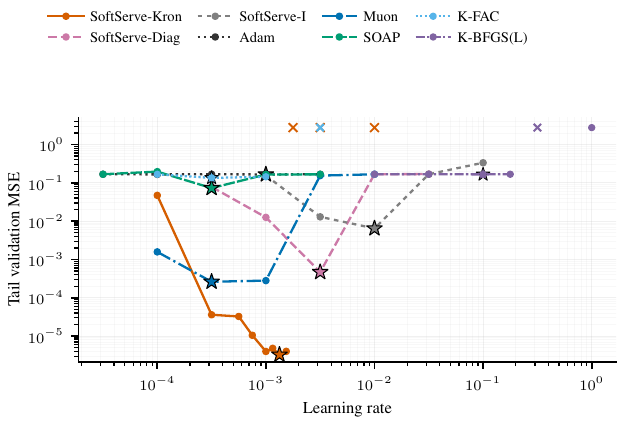}
  \caption{RNN Adding learning-rate sweeps on seed 0. Stars mark selected rates; crosses mark numerical failures.}
  \label{fig:lr-rnn}
\end{figure}
\begin{figure}[htbp]
  \centering
  \includegraphics[width=\linewidth]{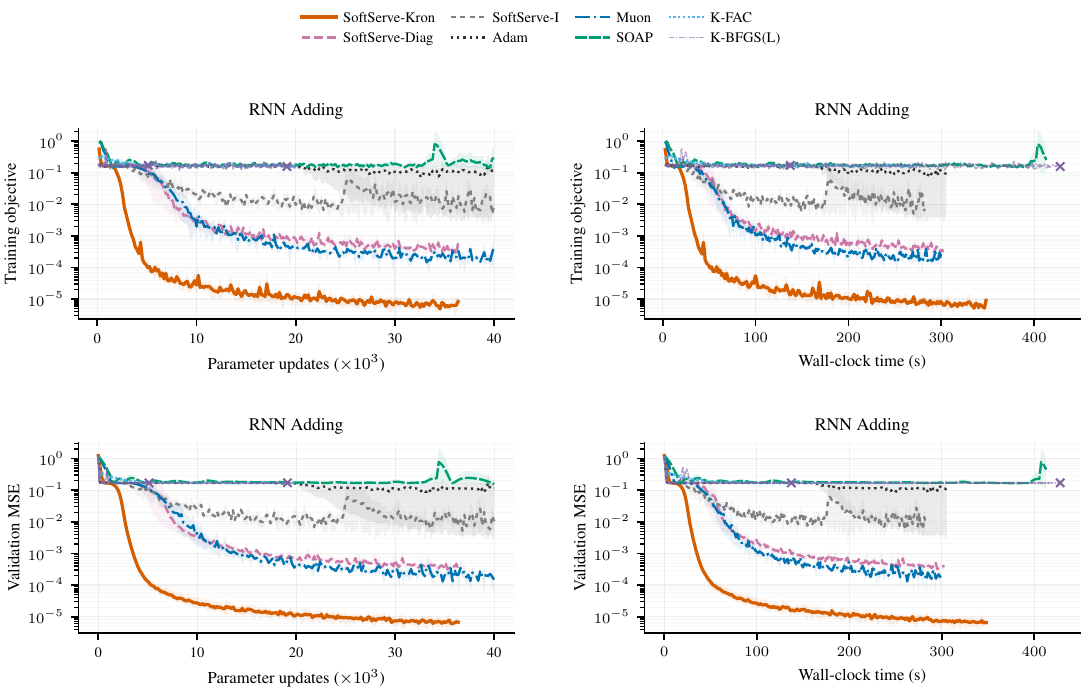}
  \caption{RNN Adding training minibatch MSE and held-out validation MSE against parameter updates and wall-clock time. Failed cohorts retain their individual traces.}
  \label{fig:rnn-axes}
\end{figure}

\begin{figure}[htbp]
  \centering
  \includegraphics[width=0.5\linewidth]{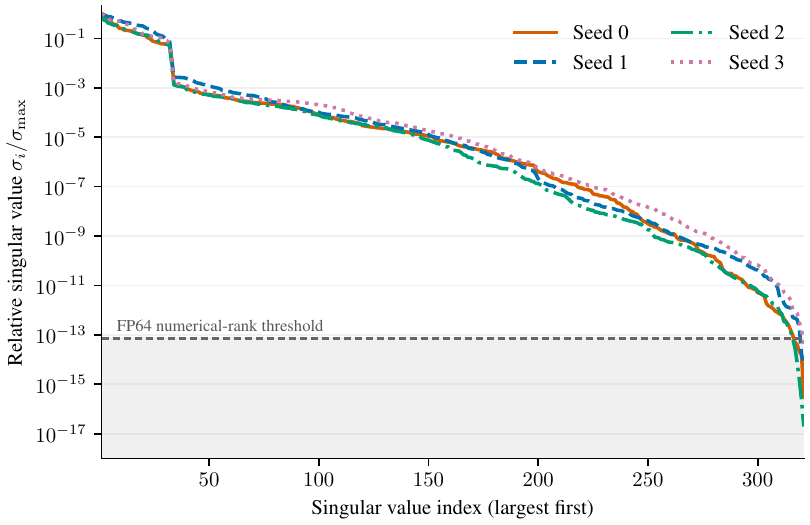}
  \caption{RNN Adding loss-Hessian singular values at initialization, normalized by the largest value. The dashed line marks the numerical-rank threshold $321\epsilon_{64}$. All four seeds have singular values below this threshold.}
  \label{fig:rnn-hessian-spectrum}
\end{figure}

\FloatBarrier
\subsection{MNIST autoencoder}
\label{app:autoencoder}

\paragraph{Task and data.}
We use MNIST, comprising $28\times28$ grayscale images of handwritten digits, with 60,000 training and 10,000 test examples and pixel values in $[0,1]$. The task is to compress each image into a low-dimensional representation and reconstruct the original input, without using digit labels. This autoencoder task is a standard benchmark for second-order optimization \citep{goldfarb2020practical}.

\paragraph{Model and objective.}
We use a 2.8M-parameter fully connected autoencoder with a symmetric encoder--decoder architecture and a 30-dimensional linear bottleneck. The remaining hidden layers use ReLU, and the output uses sigmoid. Following \citet{goldfarb2020practical}, the batch objective is
\begin{equation}
\mathcal L(\theta)=-\frac1B\sum_{i=1}^{B}\sum_{j=1}^{784}
\left[x_{ij}\log\widehat x_{ij}+(1-x_{ij})\log(1-\widehat x_{ij})\right]
+\frac{10^{-5}}2\|\theta\|_2^2,
\end{equation}
where $B$ is the batch size, $x$ is the input image, and $\widehat x$ is its reconstruction.

\paragraph{Training and tuning.}
Full-batch training uses 4,000 gradient evaluations; minibatch training uses batches of 1,000 images and 6,000 evaluations. \SoftServe{}-Kron, \SoftServe{}-I, Muon, and SOAP use Adam on biases, while \SoftServe{}-Diag acts on all parameters. Full-batch L-BFGS uses a strong-Wolfe line search.

A quarter-budget pilot screens hyperparameters, followed by full-budget learning-rate sweeps selected by the mean of the final five training objectives (\Cref{fig:autoencoder-lr}). We select $\lambda=999$ for \SoftServe{}-Kron and $\lambda=9$ for \SoftServe{}-Diag. 

\paragraph{Evaluation.}
\Cref{fig:autoencoder-test} reports test reconstruction BCE, excluding the training regularizer. \Cref{fig:autoencoder-time} shows the regularized training objective
against parameter updates and wall-clock time. Failed runs are shown individually rather than included in three-seed means.

\begin{figure}[htbp]
  \centering
  \includegraphics[width=\linewidth]{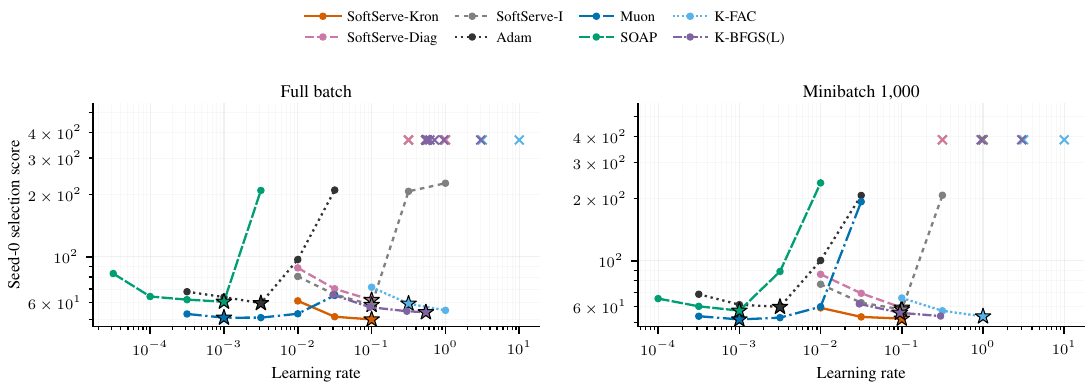}
  \caption{Autoencoder full-budget LR sweeps at selected momentum and fixed $\lambda$. Stars mark selected rates. Crosses mark failed or off-scale runs.}
  \label{fig:autoencoder-lr}
\end{figure}
\begin{figure}[htbp]
  \centering
  \includegraphics[width=\linewidth]{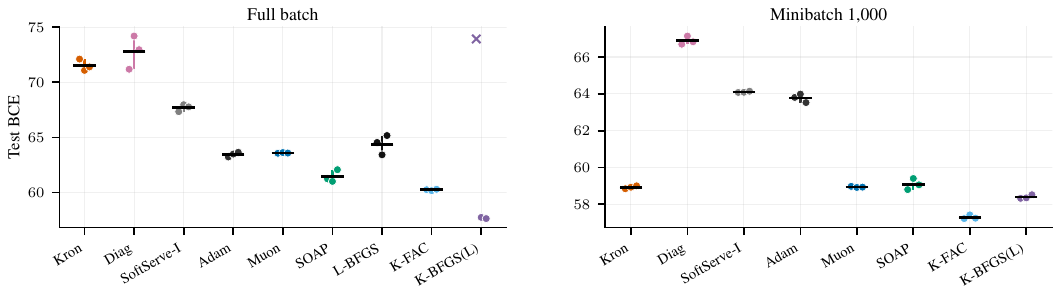}
  \caption{Autoencoder test reconstruction BCE. Dots are individual seeds; ticks are three-seed means. Full-batch K-BFGS(L) has one failed seed and therefore no three-seed mean.}
  \label{fig:autoencoder-test}
\end{figure}
\begin{figure}[htbp]
  \centering
  \includegraphics[width=\linewidth]{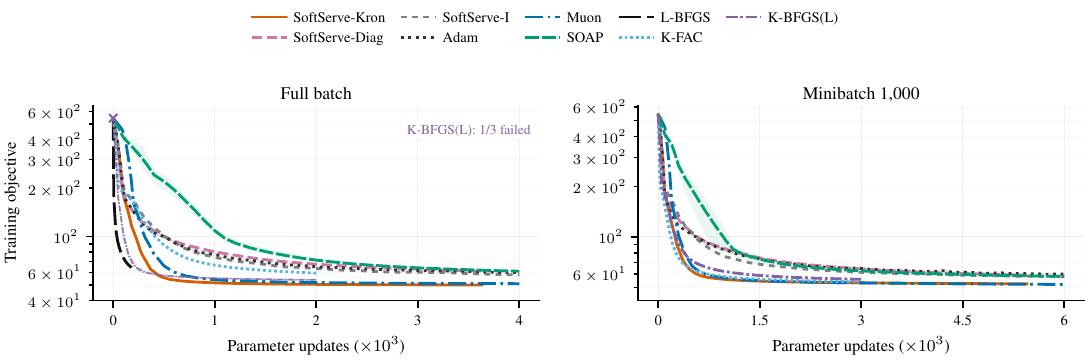}
  \includegraphics[width=\linewidth]{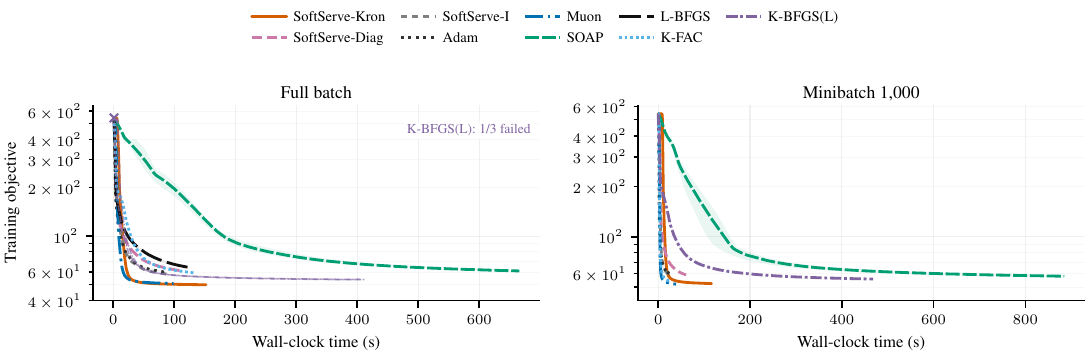}
  \caption{Autoencoder training objective against parameter updates and wall-clock time. Shading spans the three seed values.}
  \label{fig:autoencoder-time}
\end{figure}

\FloatBarrier
\subsection{Small physics-informed neural networks}
\label{app:pinn}

\paragraph{Task and data.}
We consider three one-dimensional problems from \citet{rathore2024}. The scalar function $u(x,t)$ denotes the solution at spatial coordinate $x$ and time $t$. All problems use
$t\in[0,1]$.

\medskip
\noindent\textit{Convection.}
On the spatial domain $[0,2\pi]$, we solve
\begin{align*}
\frac{\partial u}{\partial t}
+40\frac{\partial u}{\partial x}&=0,\\
u(x,0)&=\sin x,\\
u(0,t)&=u(2\pi,t).
\end{align*}
The analytical solution is
\[
u^\star(x,t)=\sin(x-40t).
\]

\medskip
\noindent\textit{Reaction.}
On the spatial domain $[0,2\pi]$, we solve
\begin{align*}
\frac{\partial u}{\partial t}-5u(1-u)&=0,\\
u(x,0)&=h(x),\\
u(0,t)&=u(2\pi,t),
\end{align*}
where $h(x)=\exp[-(x-\pi)^2/(2(\pi/4)^2)]$.
The analytical solution is
\[
u^\star(x,t)=\frac{h(x)e^{5t}}{h(x)e^{5t}+1-h(x)}.
\]

\medskip
\noindent\textit{Wave.}
On the spatial domain $[0,1]$, we solve
\begin{align*}
\frac{\partial^2 u}{\partial t^2}
-4\frac{\partial^2 u}{\partial x^2}&=0,\\
u(x,0)&=\sin(\pi x)+\tfrac12\sin(5\pi x),\\
\frac{\partial u}{\partial t}(x,0)&=0,\\
u(0,t)=u(1,t)&=0.
\end{align*}
The analytical solution is
\[
u^\star(x,t)=\sin(\pi x)\cos(2\pi t)
+\tfrac12\sin(5\pi x)\cos(10\pi t).
\]

The analytical solutions are used to evaluate solution errors, not as interior training targets.

\paragraph{Model and objective.}
We use a fully connected network with three width-200 tanh hidden layers and 81,201 parameters to predict $u_\theta(x,t)$. The training objective is
\[
\mathcal L(\theta)
=\mathcal L_{\mathrm{PDE}}
+\mathcal L_{\mathrm{IC}}
+\mathcal L_{\mathrm{BC}}.
\]
The PDE term is the mean squared equation residual. The initial- and boundary-condition terms sum the MSEs for the prescribed conditions. For Wave, these include both initial displacement and velocity, and the values at both spatial boundaries.

\paragraph{Training and tuning.}
We use 200,000 gradient evaluations and cosine learning-rate decay to one tenth of the initial rate. In the deterministic setting, the 10,000 interior collocation points remain fixed throughout training. In the stochastic setting, they are resampled at every update. Both gradients of each secant pair are evaluated on the same points.

Following hyperparameter screening, we select learning rates by the mean of the final five training losses (\Cref{fig:lr-pinn}). \SoftServe{}-Kron uses $\beta_1=0.995$. For Wave, Convection, and Reaction, respectively, its selected $\lambda$ values are $(99,9999,999)$ in the deterministic setting and 99 for all three in the stochastic setting. In the same problem order, \SoftServe{}-Diag uses $(9,1,1)$ for deterministic training and $(9,1,9)$ for stochastic training.

\SoftServe{}-Kron, \SoftServe{}-I, Muon, and SOAP update the two $200\times200$ hidden weight matrices, with Adam on the remaining parameters. Deterministic experiments also include L-BFGS and Adam followed by L-BFGS, using strong-Wolfe line searches.

\paragraph{Evaluation.}
We measure relative solution error,
\[
\frac{\|u_\theta-u^\star\|_2}{\|u^\star\|_2},
\]
with both solutions evaluated on a $129\times101$ space--time grid. \Cref{fig:pinn-l2-appendix} reports FP32 solution errors. Figures~\ref{fig:pinn-steps} and~\ref{fig:pinn-time-appendix} show training losses against parameter updates and wall-clock time.

Since numerical precision can affect PINN convergence \citep{xu2025fp64}, we repeat the experiments in FP64 using the FP32-selected configurations without retuning. Figures~\ref{fig:pinn-fp64-loss} and~\ref{fig:pinn-fp64-appendix} report the corresponding training losses and solution errors. Failed runs are shown individually. When an L-BFGS line search returns zero displacement, we extend its final value horizontally to indicate that the iterate  remains unchanged.

\begin{figure}[htbp]
  \centering
  \includegraphics[width=\linewidth]{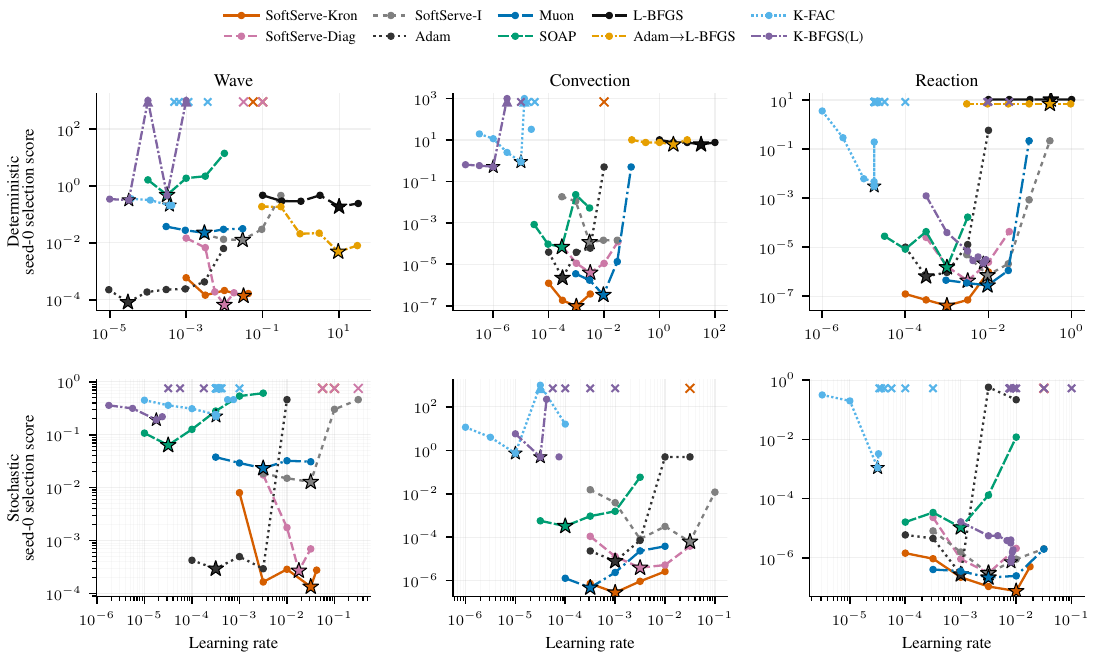}
  \caption{Small PINN learning-rate sweeps on seed 0, scored by the mean of the final five training losses. Stars mark selected rates, crosses numerical failures, and triangles off-scale finite scores.}
  \label{fig:lr-pinn}
\end{figure}
\begin{figure}[htbp]
  \centering
  \includegraphics[width=\linewidth]{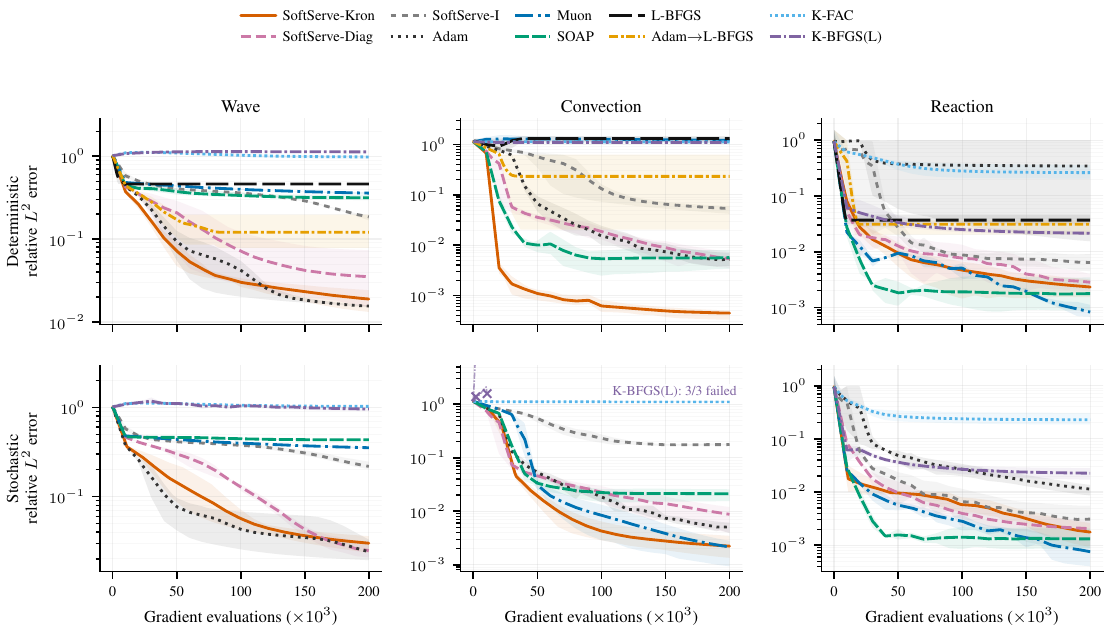}
  \caption{FP32 PINN relative solution error. Columns: Wave, Convection, Reaction. Rows: fixed and resampled collocation points. The horizontal continuation of a terminated L-BFGS run represents its unchanged solution.}
  \label{fig:pinn-l2-appendix}
\end{figure}
\begin{figure}[htbp]
  \centering
  \includegraphics[width=\linewidth]{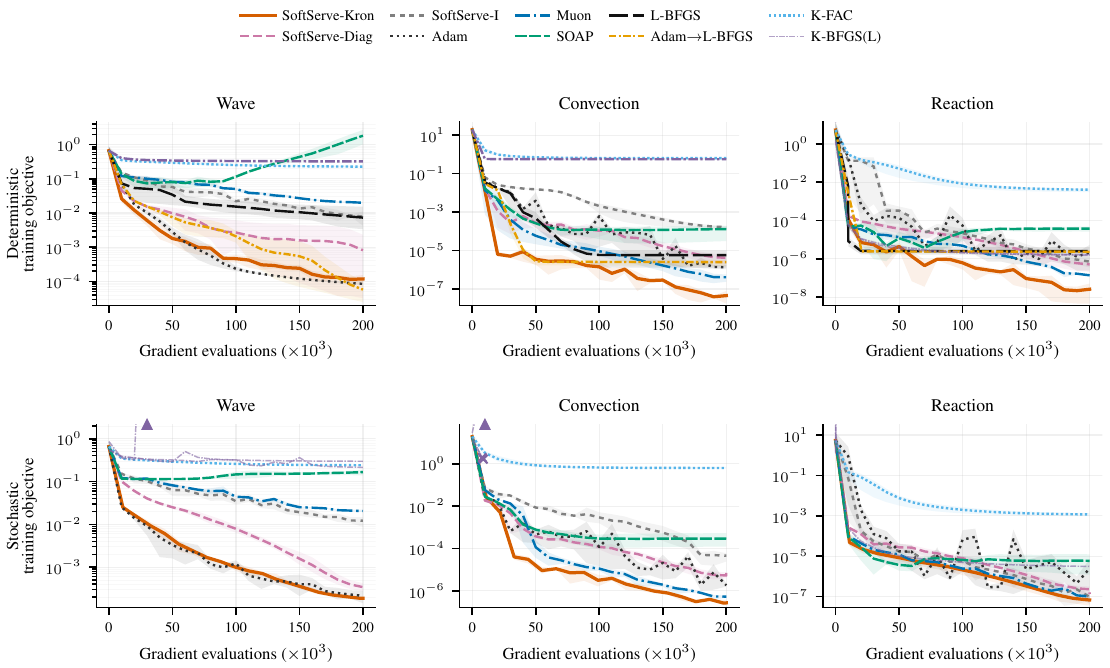}
  \caption{FP64 PINN training objectives with the FP32-selected settings. The gradient budget includes replay and line-search evaluations.}
  \label{fig:pinn-fp64-loss}
\end{figure}
\begin{figure}[htbp]
  \centering
  \includegraphics[width=\linewidth]{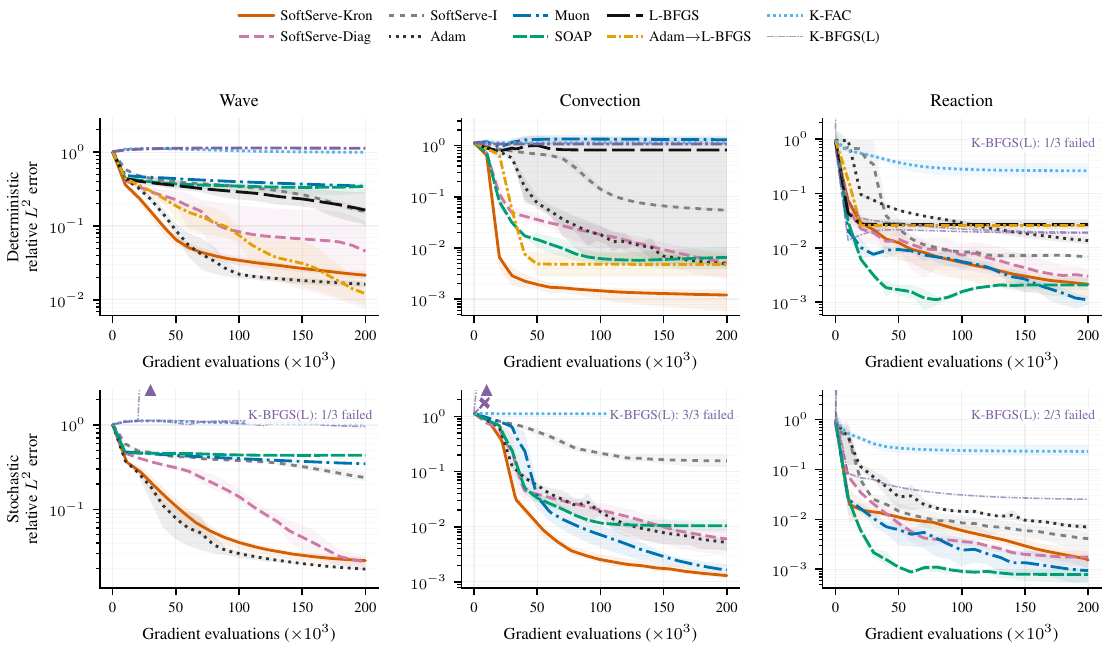}
  \caption{FP64 PINN solution errors with the FP32-selected settings. Settings with failed seeds are shown as individual traces.}
  \label{fig:pinn-fp64-appendix}
\end{figure}
\begin{figure}[htbp]
  \centering
  \includegraphics[width=\linewidth]{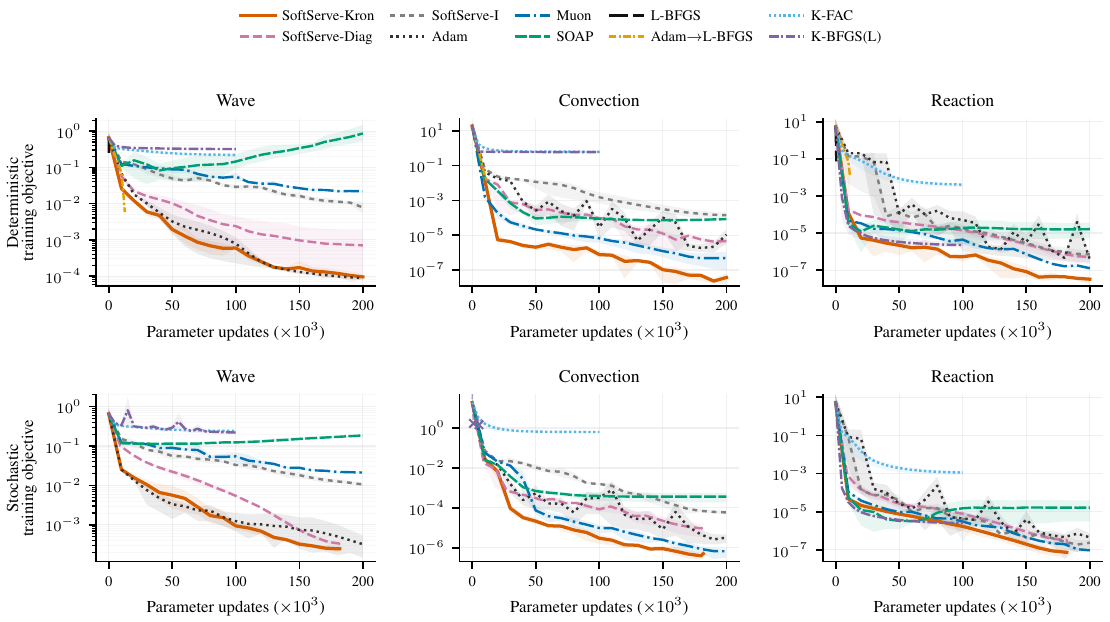}
  \caption{FP32 PINN training objectives against parameter updates.}
  \label{fig:pinn-steps}
\end{figure}
\begin{figure}[htbp]
  \centering
  \includegraphics[width=\linewidth]{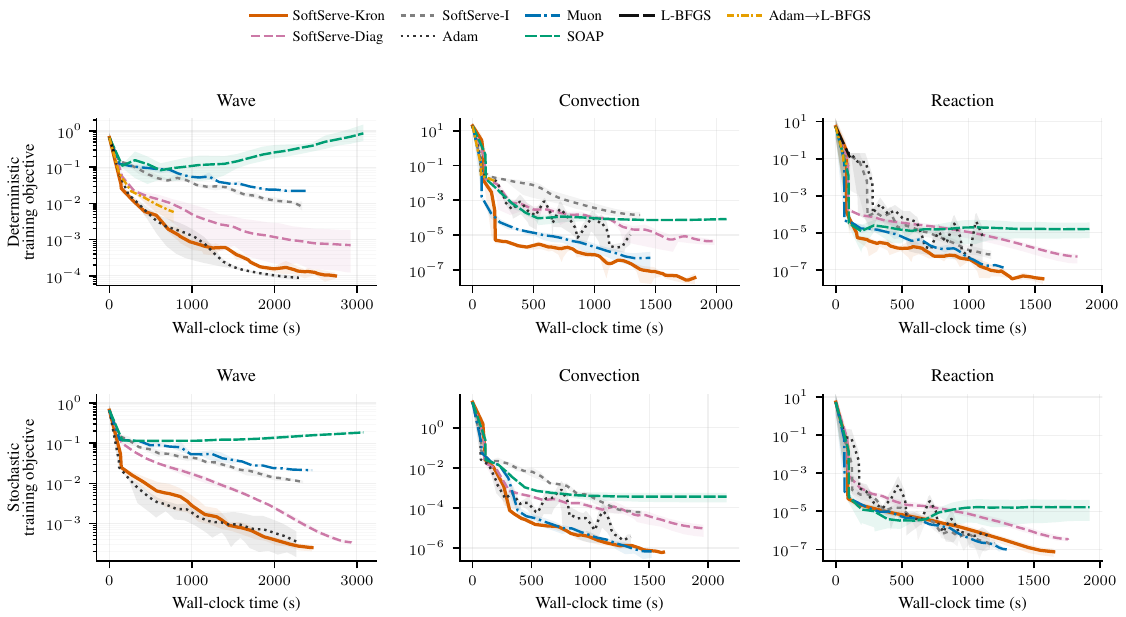}
  \caption{FP32 PINN training objectives against wall-clock time. K-FAC and K-BFGS(L) are omitted here because their timers also include construction time.}
  \label{fig:pinn-time-appendix}
\end{figure}

\FloatBarrier
\subsection{PirateNet}
\label{app:piratenet}

\paragraph{Task and data.}
We solve the Allen--Cahn and Korteweg--de Vries (KdV) equations on $(x,t)\in[-1,1]\times[0,1]$, where $u(x,t)$ is the scalar solution.

\medskip
\noindent\textit{Allen--Cahn.}
\begin{align*}
\frac{\partial u}{\partial t}
-10^{-4}\frac{\partial^2 u}{\partial x^2}
+5u^3-5u&=0,\\
u(x,0)&=x^2\cos(\pi x).
\end{align*}

\medskip
\noindent\textit{KdV.}
\begin{align*}
\frac{\partial u}{\partial t}
+u\frac{\partial u}{\partial x}
+0.022^2\frac{\partial^3 u}{\partial x^3}&=0,\\
u(x,0)&=\cos(\pi x).
\end{align*}
Both problems use periodic spatial boundaries. Reference solutions for evaluation are taken from the \texttt{jaxpi} repository.\footnote{\url{https://github.com/PredictiveIntelligenceLab/jaxpi}}

\paragraph{Model and objective.}
We use PirateNet \citep{wang2024piratenets}, a residual PINN with three width-256 blocks, tanh activations, and approximately 0.73M parameters. The network uses periodic spatial features, and its final layer is initialized by least squares against the initial condition.

Following \citet{wang2025gradient}, training minimizes a weighted combination of PDE residual and initial-condition MSEs. Causal weighting prioritizes earlier times, while gradient-norm balancing adjusts the relative weights of the loss terms. These weights are treated as constants during differentiation and held fixed between the two secant
evaluations.

\paragraph{Training and tuning.}
Each update samples 32,768 uniform collocation points. We compare methods at 300,600 gradient evaluations for Allen--Cahn and 100,200 for KdV, counting both secant replay and loss-balancing gradients. Learning rates use a 5,000-update warmup followed by exponential decay. Matrix methods use $\beta_1=0.995$ and Adam on the remaining parameters. Momentum settings are fixed during tuning.

Learning-rate sweeps run for 300,000 updates on Allen--Cahn and 100,000 on KdV, with selection based on unweighted validation loss (\Cref{fig:piratenet-lr}). The selected $\lambda$ values for \SoftServe{}-Kron are $\lambda=9999$ for Allen--Cahn and $\lambda=999$ for KdV.

\paragraph{Evaluation.}
Although all optimizers use the same weighting rules, their training weights evolve differently. We therefore compare the unweighted sum of PDE and initial-condition MSEs on common fixed points. \Cref{fig:piratenet-axes} shows this loss against parameter updates and wall-clock time.

\Cref{fig:piratenet-error} reports relative $L^2$ solution errors on the reference grid. Solution errors are not used for learning-rate selection. All curves stop at the same gradient budgets as the main figures, so \SoftServe{}-Kron takes fewer parameter updates because of replay.

\begin{figure}[htbp]
  \centering
  \includegraphics[width=\linewidth]{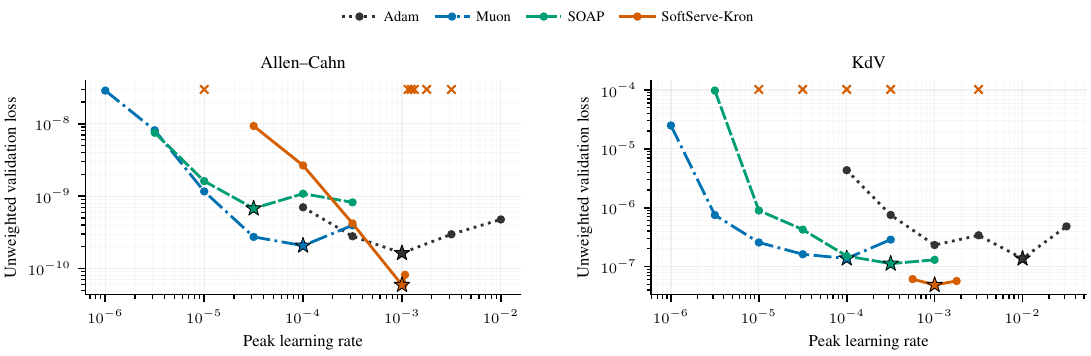}
  \caption{PirateNet full-budget learning-rate sweeps, scored by unweighted validation loss. Stars mark selected rates; crosses mark failures.}
  \label{fig:piratenet-lr}
\end{figure}
\begin{figure}[htbp]
  \centering
  \includegraphics[width=\linewidth]{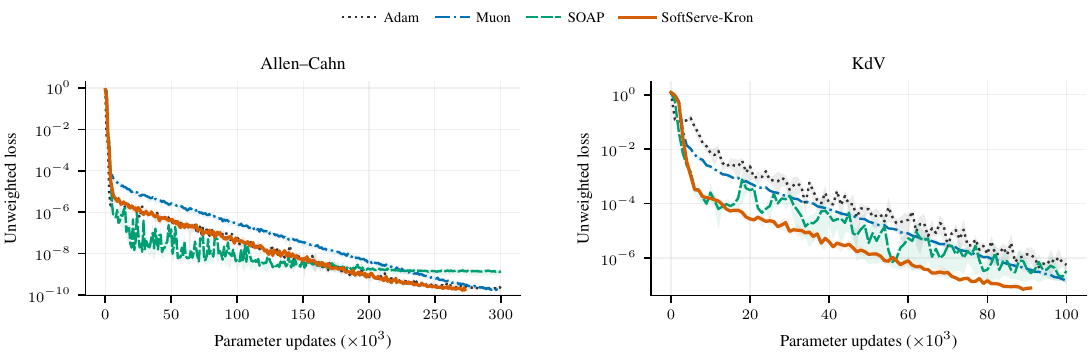}
  \includegraphics[width=\linewidth]{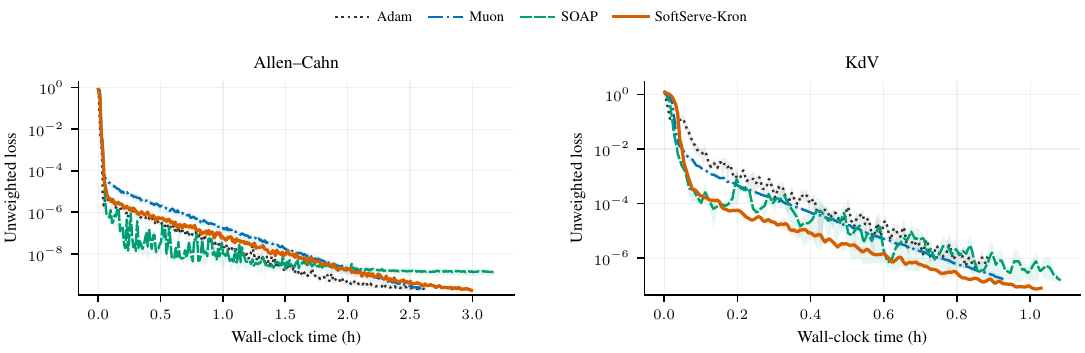}
  \caption{PirateNet unweighted PDE and initial-condition MSE against updates and wall-clock time. All curves stop at 300,600 gradient evaluations for Allen--Cahn and 100,200 for KdV; Kron therefore uses fewer parameter updates.}
  \label{fig:piratenet-axes}
\end{figure}
\begin{figure}[htb]
  \centering
  \includegraphics[width=\linewidth]{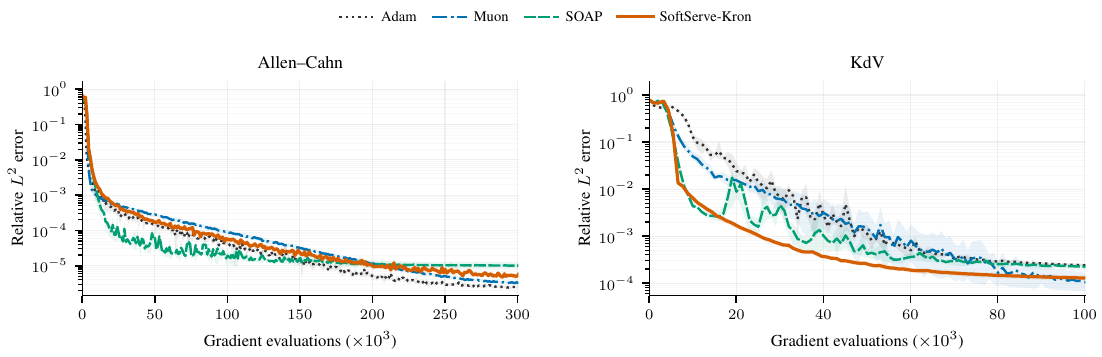}
  \caption{PirateNet relative $L^2$ error over the reference space--time grid, up to the same gradient budgets as the training-loss plots. This metric is not used to select the learning rate.}
  \label{fig:piratenet-error}
\end{figure}

\FloatBarrier
\subsection{Physics-informed diffusion}
\label{app:pidm}

\paragraph{Task and data.}
We consider structural design using the mechanics dataset of \citet{bastek2025physics}, which contains 30,000 optimized structures. Given supports, applied loads, and a material budget, the model generates a material-density field and two displacement fields. We retain the released $65\times65$ field representation and $64\times64$ finite-element grid.

\paragraph{Model and objective.}
We use the released 136M-parameter U-Net, consisting of residual convolution blocks and attention layers. It predicts the two displacement fields and the material density, with
a sigmoid output for density.

Let $z_0$ contain the target fields and $c$ the design conditions. At diffusion timestep $t$, we construct a noisy input
\[
z_t=\sqrt{\bar\alpha_t}z_0+\sqrt{1-\bar\alpha_t}\epsilon,
\qquad \epsilon\sim\mathcal N(0,I),
\]
where $\bar\alpha_t$ is the cumulative signal-retention factor. The network predicts the clean fields as $\widehat z_0=f_\theta(z_t,t,c)$. The training objective combines
reconstruction and physics-based terms:
\begin{equation}
\mathcal L(\theta)
=\mathbb E\!\left[
w_t\,\operatorname{MSE}(\widehat z_0,z_0)
+\ell_t(r_{\mathrm{eq}})
+10\,\ell_t(r_{\mathrm{vol}})
+10^{-4}C
\right].
\end{equation}
The expectation is over training examples, diffusion timesteps, and noise. The equilibrium residual is
\[
r_{\mathrm{eq}}=K(\widehat\rho)\widehat u-f,
\]
after applying boundary conditions, where $K(\widehat\rho)$ is the stiffness matrix determined by the predicted density, $\widehat u$ is the predicted displacement, and $f$ is
the applied load. The volume residual $r_{\mathrm{vol}}$ measures deviation from the material budget. Compliance $C$ measures structural flexibility under the load. We retain
the released denoising weights $w_t$ and Gaussian residual penalties $\ell_t$, including their diffusion-dependent variances, for all optimizers.

\paragraph{Training and tuning.}
We use batches of 16, 100 diffusion timesteps, constant learning rates, gradient clipping at 1, and no weight decay. An exponential moving average (EMA) of the parameters uses
decay 0.99 after the first 1,000 updates.

Only learning rates are tuned. Each method initially tests five rates for 10,000 updates, extending the grid when needed. Selection uses the EMA validation objective on 128
fixed examples with fixed noise (\Cref{fig:pidm-lr}). Final runs use a budget of 100,000 gradient evaluations.

Momentum and $\lambda$ remain fixed. Muon uses $\beta_1=0.95$ with Nesterov; Adam, SOAP, and \SoftServe{}-Kron use $\beta_1=0.9$. Adam and SOAP use $\beta_2=0.999$.
\SoftServe{}-Kron uses $\lambda=9$ without Nesterov. Matrix methods use the same convolution-kernel partitions, with blocks of at most $256\times256$, and Adam at learning rate
$10^{-4}$ on the remaining parameters.

\paragraph{Evaluation.}
We evaluate the training objective on 32 fixed examples with fixed noise using the current, unaveraged parameters. Validation uses the EMA parameters on 128 separate fixed
examples. \Cref{fig:pidm-axes} shows both objectives against gradient evaluations, parameter updates, and wall-clock time. All curves stop at 100,000 gradient evaluations.
Including secant replay, this permits approximately 90,900 parameter updates for \SoftServe{}-Kron and 100,000 for the baselines.

\begin{figure}[htbp]
  \centering
  \includegraphics[width=0.65\linewidth]{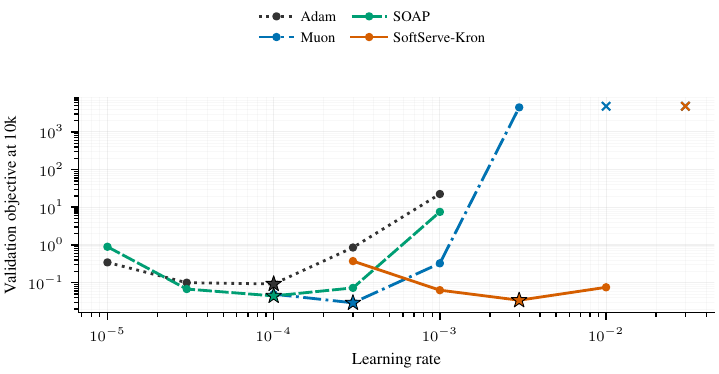}
  \caption{PIDM LR selection after 10,000 updates. The score is the EMA validation objective. Stars mark selected rates; crosses mark numerical failures.}
  \label{fig:pidm-lr}
\end{figure}
\begin{figure}[htbp]
  \centering
  \includegraphics[width=\linewidth]{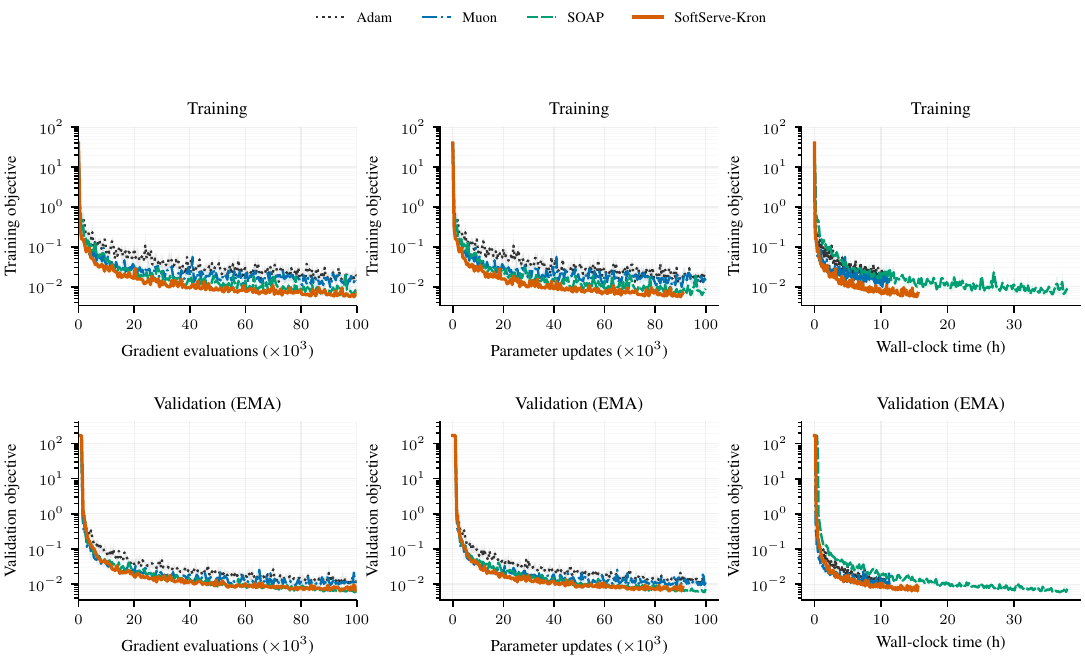}
  \caption{PIDM training objective (unaveraged parameters) and validation objective (EMA parameters) against gradient evaluations, updates, and wall-clock time. Every curve stops at 100,000 gradient evaluations, corresponding to approximately 90,900 parameter updates for Kron and 100,000 for the baselines.}
  \label{fig:pidm-axes}
\end{figure}

\FloatBarrier
\subsection{Language model}
\label{app:llm}

\paragraph{Task and data.}
We train a language model on 200M tokens from FineWeb \citep{penedo2024fineweb}. The task is to predict each token from its preceding context.

\paragraph{Model and objective.}
We use a 16.17M-parameter GPT-style model \citep{radford2019language} with four layers, four attention heads, hidden width 256, and context length 512. We minimize next-token
cross-entropy, reported as negative log-likelihood (NLL) per token.

\paragraph{Training and tuning.}
Each update uses 64 sequences, giving 32,768 tokens per full update. The 200M-token budget requires 6,104 updates, with a smaller final batch. Kron additionally uses 610 same-batch gradient replays. The learning rate warms up over the first 2\% of training, remains constant, then decays by cosine to one tenth of its peak over the final 10\%. We use no dropout, weight decay, or gradient clipping.

We tune learning rates on seed 0 using five-point sweeps at the full 200M-token budget, selecting by validation NLL. We first tune AdamW, then use its selected peak learning rate of $0.003$ for the AdamW updates outside the matrix methods. Muon, SOAP, and Kron update the same attention and MLP weight matrices, with AdamW on the remaining parameters. The selected peak learning rates are $0.003$ for Muon and SOAP, and $0.01$ for Kron.

Kron uses $\lambda=99$, $\beta_1=0.99$, $K=10$, and no Nesterov. Muon uses $\beta_1=0.9$ with Nesterov. SOAP uses betas $(0.95,0.95)$ and refreshes its preconditioner every 10 updates. AdamW uses betas $(0.9,0.95)$, both as a baseline and for the remaining parameters of the matrix methods. Momentum, Nesterov, and $\lambda$ are fixed throughout tuning.

\paragraph{Evaluation.}
We evaluate the selected configurations on fresh seeds 1--3 and report their mean NLL. Final training NLL uses one million training tokens; learning-rate selection and final validation NLL use five million held-out tokens.

\FloatBarrier
\section{Ablation studies}
\label{app:ablations}

We study the secant penalty $\lambda$, the accuracy of the Gram QME solver, and the refresh interval $K$ for \SoftServe{}-Kron. Training studies use RNN Adding and minibatch MNIST, with budgets of 40,000 and 6,000 gradient evaluations, respectively, including secant replay. For each setting, we select the learning rate on seed 0 and evaluate on fresh seeds 1--3. RNN selection uses validation MSE; MNIST selection uses the mean of the last five training objectives. Test errors are not used for selection. Matching completed runs are reused.

Unless varied, $\lambda=999$, $K=10$, and the solver uses 18 coupled root iterations and 10 inverse iterations. RNN uses $\beta_1=0.99$ and MNIST uses $\beta_1=0.95$, both with Nesterov momentum. Lines and points show arithmetic means; bands and error bars span the minimum and maximum over the three seeds.

\subsection{Sensitivity to the secant penalty}
\label{app:ablation-lambda}

We vary $\lambda$ from 0.01 to 9999 on RNN Adding, retuning the learning rate for each value. Figure~\ref{fig:ablation-lambda} shows lower mean validation error as $\lambda$ increases from 0.01 to 999. Values from 0.1 to 999 outperform the tuned Muon baseline in this experiment. Increasing $\lambda$ further to 9999 causes numerical failure on all three evaluation seeds. Thus, the penalty can substantially affect both accuracy and stability; the best value here is not a task-independent default. 

\begin{figure}[htbp]
  \centering
  \includegraphics[width=0.66\linewidth]{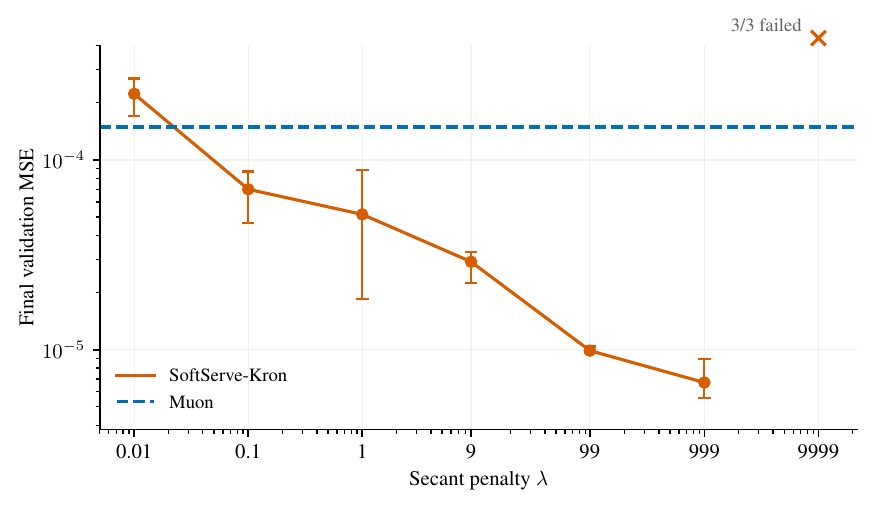}
  \caption{RNN Adding validation MSE after 40,000 gradient evaluations, with a separately selected learning rate for each $\lambda$. Points show means and bars show seed ranges. The dashed line is the mean of the completed Muon comparison. The cross above the axis denotes three numerical failures at $\lambda=9999$, not a measured loss.}
  \label{fig:ablation-lambda}
\end{figure}

\FloatBarrier
\subsection{Accuracy of the QME solver}
\label{app:ablation-qme}

We first isolate numerical accuracy by solving $\mat{Q}\mat{U}\mat{Q}+\mat{Q}=\mat{V}$ for fixed $256\times256$ matrices. We construct $\mat{U}=\mat{R}\mat{R}^{\top}$ with singular values of $\mat{R}$ between 0.7 and 1.3, and use independent random eigenbases for $\mat{U}$ and $\mat{V}$. The eigenvalues of $\mat{V}$ are logarithmically spaced from 1 to $\kappa\in\{10,10^3,10^5\}$. We vary both iteration counts in the FP32 Gram solver and compare with the FP64 decomposition-based solution $\mat{Q}_\star$ using relative Frobenius error $\|\widehat{\mat{Q}}-\mat{Q}_\star\|_{\mathrm F}/\|\mat{Q}_\star\|_{\mathrm F}$.

Figure~\ref{fig:ablation-qme-heatmap} shows that too few iterations can leave substantial error. At the default 18/10 counts, median errors are approximately $2.3\times10^{-6}$, $7.3\times10^{-5}$, and $7.2\times10^{-3}$ for the three values of $\kappa$. More iterations do not uniformly reduce FP32 error, particularly once rounding error limits accuracy.

\begin{figure}[htbp]
  \centering
  \includegraphics[width=\linewidth]{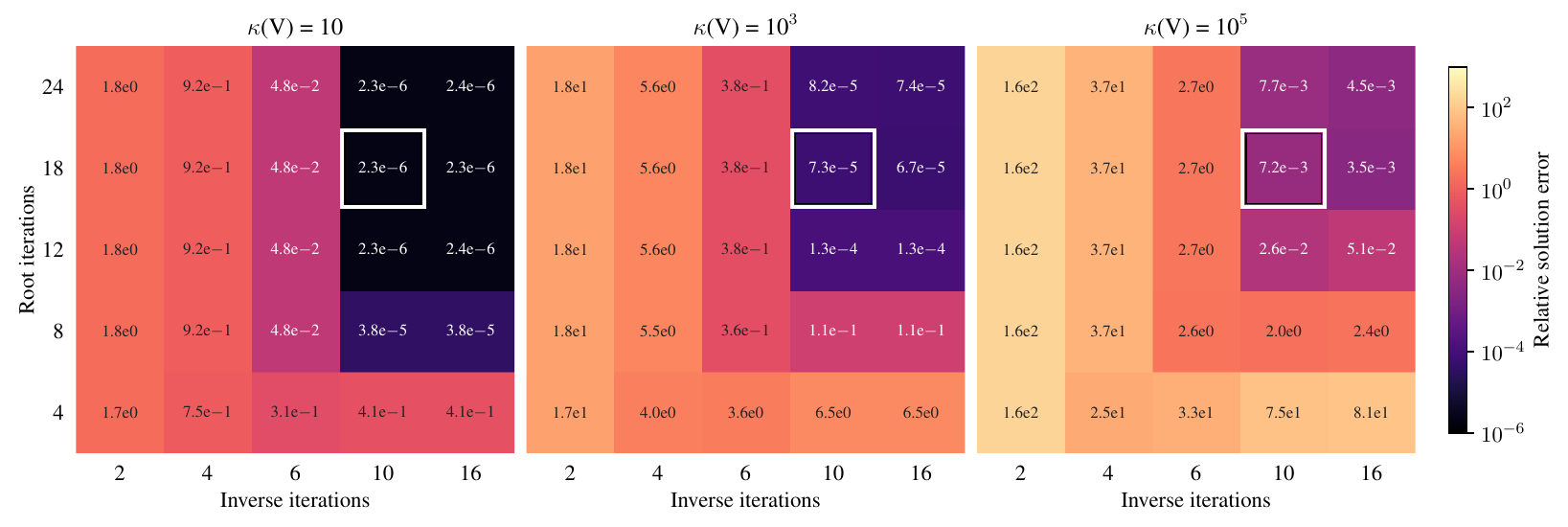}
  \caption{Relative QME solution error as the root and inverse iteration counts vary. Each cell is the median over three matrix seeds; all panels share a logarithmic color scale. Lower values are better. White outlines mark the default 18 root and 10 inverse iterations.}
  \label{fig:ablation-qme-heatmap}
\end{figure}

We then vary the solver counts during RNN training (Figure~\ref{fig:ablation-qme-rnn}). We compare 12, 18, and 24 root iterations at 10 inverse iterations, and 10, 16, and 24 inverse iterations at 18 root iterations. Separate sweeps select a learning rate of $1.33\times10^{-3}$ for all four stable settings. Their learning curves are similar; increasing either count beyond 18/10 does not consistently improve validation error. With 12/10, all three tested seed-0 learning rates fail before the full budget, so no fresh-seed mean is shown.

\begin{figure}[htbp]
  \centering
  \includegraphics[width=0.8\linewidth]{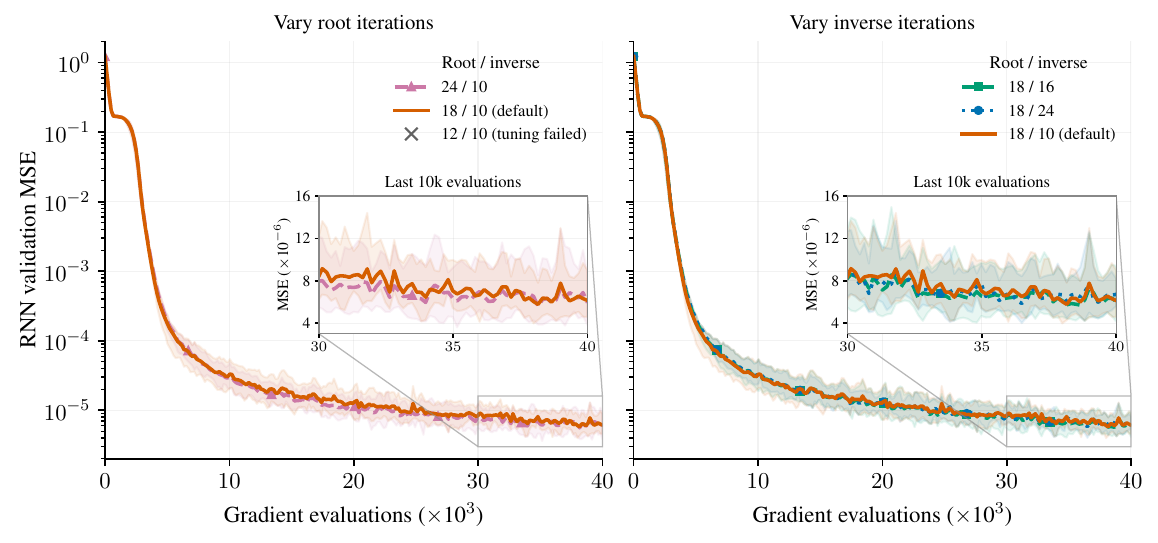}
  \caption{RNN validation MSE as root iterations vary (left) and inverse iterations vary (right). Legend entries give root/inverse counts; 18/10 is the default. Each completed setting uses a separately selected learning rate and three fresh seeds. Lines show means and shading spans the seed minimum and maximum. Insets enlarge the last 10,000 gradient evaluations on a linear MSE axis with shared limits. The 12/10 setting failed at all three tested tuning rates and has no learning curve.}
  \label{fig:ablation-qme-rnn}
\end{figure}

\FloatBarrier
\subsection{Refresh frequency and training cost}
\label{app:ablation-refresh}

We compare $K\in\{1,5,10,20,50\}$, retuning the learning rate at each interval. Figure~\ref{fig:ablation-refresh} shows learning curves against gradient evaluations and wall-clock time. More frequent refreshes require more secant replays and leave fewer parameter updates within the same budget. Refreshing at every update slows convergence on both tasks. The best trade-off is task-dependent: $K=5$ reaches the lowest mean RNN validation MSE, while larger intervals shorten MNIST training with similar final objectives.

\begin{figure}[htbp]
  \centering
  \includegraphics[width=\linewidth]{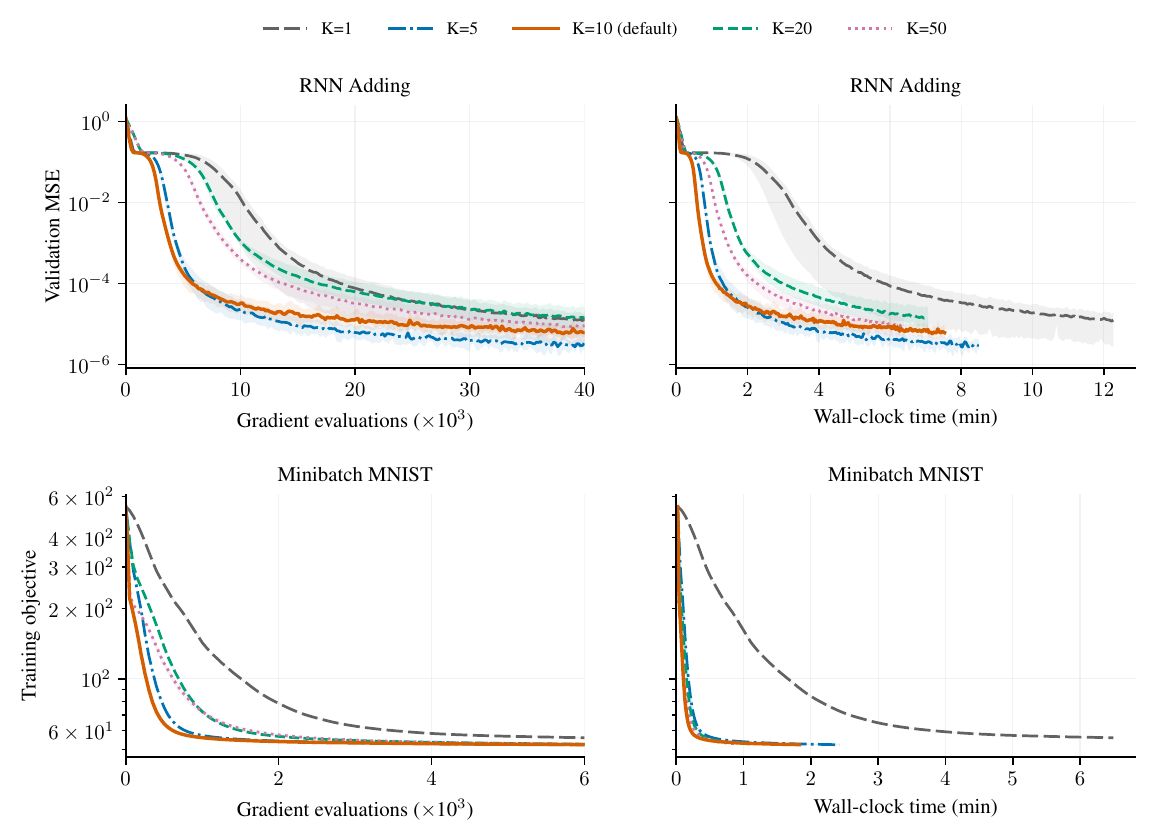}
  \caption{Learning curves for different refresh intervals. Rows show RNN validation MSE and minibatch MNIST training objective; columns show gradient evaluations and wall-clock time. All runs stop at the same task-specific gradient budget, including replay: 40,000 for RNN and 6,000 for MNIST. Lines show three-seed means and shading spans the seed minimum and maximum. Wall-clock time includes evaluation and diagnostic overhead. Orange lines mark the default $K=10$.}
  \label{fig:ablation-refresh}
\end{figure}
\end{document}